\newtheorem{proposition}{Proposition}[section]
\title{DPRM: A Dual Implicit Process Reward Model in Multi-Hop Question Answering}
\author {
    Xinyi Wang\textsuperscript{\rm 1},
    Yiping Song\textsuperscript{\rm 1},
    Zhiliang Tian\textsuperscript{\rm 1 \thanks{Corresponding authors}},
    Bo Liu\textsuperscript{\rm 2 \footnotemark[1]},
    Tingjin Luo\textsuperscript{\rm 1},
    Minlie Huang\textsuperscript{\rm 3}
}
\begin{document}

\maketitle

\begin{abstract}
In multi-hop question answering (MHQA) tasks, Chain of Thought (CoT) improves the quality of generation by guiding large language models (LLMs) through multi-step reasoning, and Knowledge Graphs (KGs) reduce hallucinations via semantic matching. Outcome Reward Models (ORMs) provide feedback after generating the final answers but fail to evaluate the process for multi-step reasoning. Traditional Process Reward Models (PRMs) evaluate the reasoning process but require costly human annotations or rollout generation. While implicit PRM is trained only with outcome signals and derives step rewards through reward parameterization without explicit annotations, it is more suitable for multi-step reasoning in MHQA tasks. However, existing implicit PRM has only been explored for plain text scenarios. When adapting to MHQA tasks, it cannot handle the graph structure constraints in KGs and capture the potential inconsistency between CoT and KG paths. To address these limitations, we propose the DPRM (Dual Implicit Process Reward Model). It trains two implicit PRMs for CoT and KG reasoning in MHQA tasks. Both PRMs, namely KG-PRM and CoT-PRM, derive step-level rewards from outcome signals via reward parameterization without additional explicit annotations. Among them, KG-PRM uses preference pairs to learn structural constraints from KGs. DPRM further introduces a consistency constraint between CoT and KG reasoning steps, making the two PRMs mutually verify and collaboratively optimize the reasoning paths. We also provide a theoretical demonstration of the derivation of process rewards. Experimental results show that our method outperforms 13 baselines on multiple datasets with up to $16.6\%$ improvement on Hit@1.

\end{abstract}


\section{Introduction}
In the Multi-hop Question Answering (MHQA) task, Chain of Thought (CoT) helps large language models (LLMs) to generate answers by thinking step by step, and structured knowledge \cite{Liu2025Causal, LiSARATRX25} like Knowledge Graphs (KGs) makes reasoning more accurate by matching facts. These KG-enhanced LLM reasoning methods achieve good results in many domains, including customer service \cite{Xu_2024custom}, finance \cite{nishat2025finance}, medical inquiry \cite{zhao2024medical,wu2025medical}, and other scenarios.

Most of the existing MHQA methods based on LLMs and KGs follow the paradigm of Graph Retrieval Augmented Generation (GraphRAG) \cite{mavromatis2024gnn, he2025g, edge2024graphrag, hu2025grag, wang-etal-2025-dcmkc}. They retrieve query-related knowledge from KG to assist LLM in generating reasoning paths and answers. Retrieval is usually completed before LLM reasoning, and the retrieved content is fixed. Once the retrieval results are incomplete or contain redundant information, LLM has difficulty acquiring new knowledge for step-level error correction. Other works make the LLMs interact with the KG to generate answers step by step \cite{sanmartin2024kg, li2024graphreader, jin2024graph, dong2024effiqa}. In interactive systems, the planning and reflection processes rely on the LLM's ability to evaluate reasoning paths, but this evaluation is typically qualitative rather than providing quantitative scores. Additionally, KG retrievers optimize for shortest query-answer paths but lack path quality evaluation. As a result, error paths will lead to wrong answers. Therefore, a quantitative evaluation mechanism is needed to evaluate and filter CoT steps and their corresponding KG triples during reasoning.

Meanwhile, in mathematical reasoning tasks \cite{huziyi2025}, there is a widely used and proven effective quantitative evaluation mechanism for reasoning—--reward models (RMs). Early RMs are mostly explicit mechanisms. The outcome reward models (ORMs) assign rewards to the entire response and provide feedback after generating the final answer \cite{christiano2017deep, liu2024skywork, ouyang2022training}. It can effectively replace manual reward design and make the model learn human preferences. However, ORMs fail to evaluate the reasoning process for complex tasks requiring multi-step reasoning. In addition, since rewards are emitted only at the terminal state, the sparse supervision signals struggle with stability and efficiency in training \cite{cao2024train1,chan2024train2}. Process reward models (PRMs) break through this limitation by providing evaluation of intermediate steps \cite{lightman2023let, wang2023mathshepherd, zhao2025genprm, luo2024omegaprm, zhang2024rest}. This fine-grained feedback improves training efficiency and final performance. However, obtaining high-quality process-level supervision signals requires high-cost manual annotation \cite{lightman2023let} or demands heavy computation to generate rollouts for estimation \cite{wang2023mathshepherd}.

Implicit PRM does not rely on explicit annotation \cite{yuan2024implicitprm, cui2025prime, guan2025rstar}. It is trained only with outcome signals and derives step rewards through reward parameterization without manual or automated annotations. Such implicit PRM is suitable for multi-step reasoning tasks such as MHQA because it can quantitatively evaluate the reasoning steps without explicit annotations. However, existing implicit PRM only focuses on the evaluation of intermediate steps in plain text scenarios (CoT), lacking consideration of the graph structure constraint of structured knowledge (KG). This overlook is critical because KGs provide explicit relational constraints to enhance LLM reasoning.
Further, designing separate implicit PRMs for CoT and KG and applying them independently in reasoning may cause semantic and logical inconsistency between top-scoring paths. This inconsistency may lead to conflict when integrating different paths to get the final answer.

To solve these problems, we propose DPRM, a dual implicit process reward model for automatic annotation and collaborative reasoning. It trains two PRMs, CoT-PRM and KG-PRM, and both of them derive process rewards from outcome signals via reward parameterization without explicit annotations. Among them, KG-PRM uses preference pairs to learn structural constraints from KGs. This mechanism compares reasoning paths in pairs, ranking them based on whether they follow graph rules and maintain logical consistency. In training, CoT preference pairs transformed from KG data enable CoT-PRM to learn logic structure, and KG preference pairs derived from CoT data introduce relational paths absent in the original KGs. In reasoning, both PRMs act as evaluators to screen KG paths and CoTs. Based on their rewards, high-quality KG triples and CoT steps guide and reinforce each other. This iterative interaction mechanism effectively fuses the advantages of bimodal information to make the model generate more accurate answers. In summary, our contributions are:

\begin{itemize}
    \item We propose a framework for dual-modality PRMs in MHQA. It automatically derives CoT-PRM and KG-PRM from ORMs via mathematical derivation without manual step-level annotations. The KG-PRM learns to capture topological constraints via path preference pairs.
    \item We design a bidirectional KG-CoT enhancement mechanism. In training, KG-PRM and CoT-PRM enhance each other through co-training. In reasoning, the two PRMs iteratively guide KG path acquisition and CoT generation to form a self-correcting reasoning. 
    \item Our method achieves the best on multiple datasets with up to $16.6\%$ improvement on Hit@1.

\end{itemize}

\section{Related Works}
\subsection{KG-enhanced LLM Reasoning}
Most of the existing MHQA methods based on LLMs and KGs follow the paradigm of GraphRAG.
GNN-RAG \cite{mavromatis2024gnn} first retrieves candidate answers on the KG and then uses Graph Neural Networks (GNNs) to extract the reasoning paths. G-Retriver \cite{he2025g} first obtains relevant nodes and edges and then finds the most relevant KG subgraph through the Rewarded Steiner Tree Algorithm. SubgraphRAG \cite{li2025simpleeffectiverolesgraphs} generates answers by retrieving relevant subgraphs. GRAG \cite{hu2025grag} introduces k-hop self-graph matching to obtain the most relevant KG subgraph.
Retrieval is usually completed before LLM reasoning with fixed content. Once the retrieval results are incomplete or redundant, LLM cannot dynamically update knowledge to supplement and correct the reasoning process.

Other works make the LLMs interact with the KG to generate answers step by step.  
KG-RAG \citep{sanmartin2024kg} searches the KG following the exploration plan from the LLM. GraphReader \citep{li2024graphreader} uses an LLM to explore the KG, continuously updating a notebook to record relevant information. Graph-CoT \citep{jin2024graph} makes LLMs traverse the KG step by step to find the key information. RoG \cite{luo2024rog} proposes a planning-search-reasoning framework.
In interactive systems, the planning and reflection parts depend on the LLM's evaluation, but this evaluation is qualitative rather than providing quantitative scores. Additionally, KG retrievers optimize for the shortest query-answer paths but lack path quality evaluation. If paths have errors, they will lead to wrong answers.

\subsection{Reward Model}
Early RMs mostly use explicit rewards. The ORMs assign rewards to the entire response and provide feedback after generating the final answer.
\citeauthor{christiano2017deep} first proposes an RM framework based on human preferences, laying the foundation for ORM. \citeauthor{ouyang2022training} proposes a three-stage RLHF framework for ORMs. Skywork-reward \cite{liu2024skywork} trains the RM with high-quality small samples through efficient preference data screening.
These methods can effectively replace manual reward design and make the model learn human preferences. However, ORMs fail to evaluate the reasoning process for complex tasks requiring multi-step reasoning. In addition, since rewards are emitted only at the terminal state, such sparse supervision signals struggle with stability and efficiency in training. 

\begin{figure*}[htbp] 
    \centering 
    \includegraphics[width=0.95\textwidth]{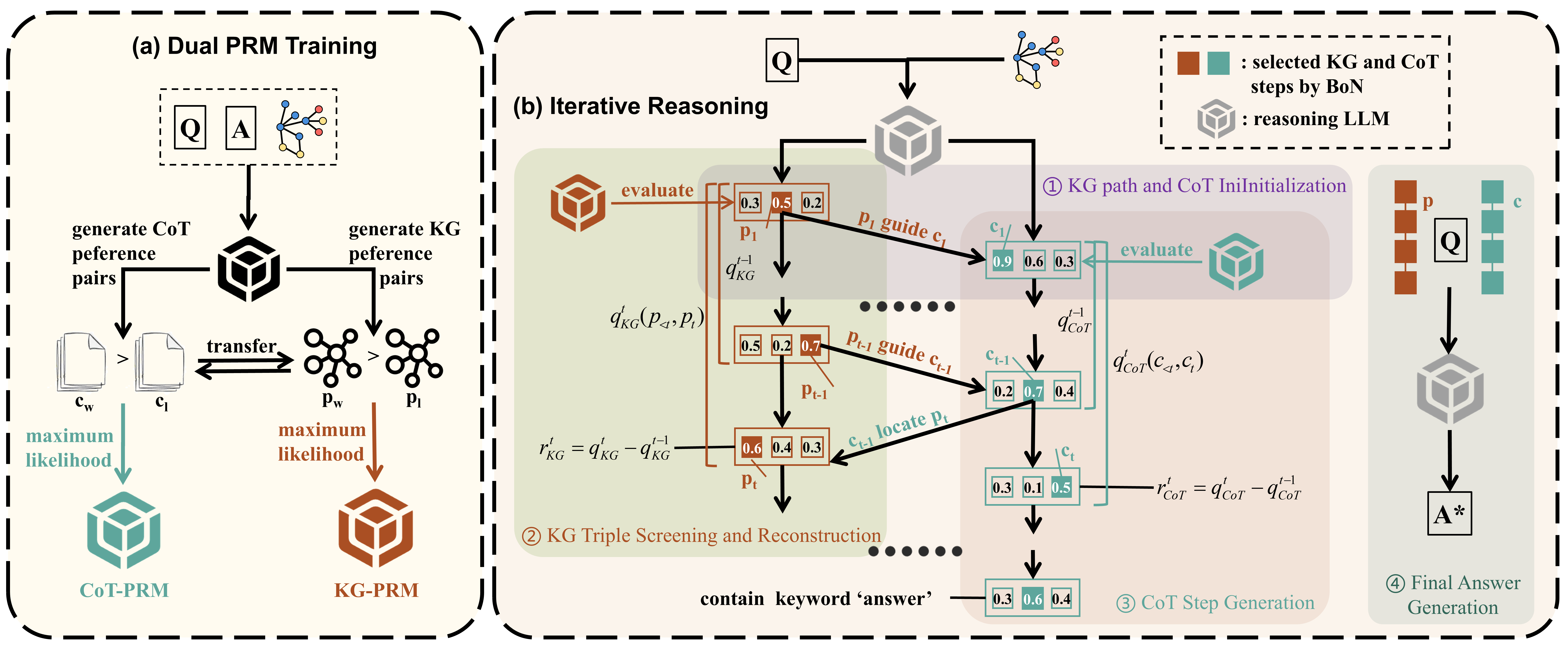} 
    \caption{The overview of DPRM. (a) \textbf{Dual PRM Training} trains CoT-PRM and KG-PRM with outcome signals. KG-PRM uses preference pairs to learn structural constraints. Co-training of both PRMs makes them mutually verify and collaboratively optimize the KG paths and CoTs. (b) \textbf{Iterative Reasoning} contains 4 parts: \ding{172} KG Path and CoT Initialization, \ding{173} KG triple screening and reconstruction (use KG-PRM), \ding{174} CoT step generation (use CoT-PRM), and \ding{175} Final Answer Generation. } 
    \label{fig:overview} 
\end{figure*}

PRMs break through this limitation by providing evaluations of intermediate steps. 
\citeauthor{lightman2023let} demonstrates that PRM solves the problem of incorrect step localization invalidation caused by sparse rewards. However, this method relies on humans to annotate reasoning steps, which is costly and difficult to generalize. Subsequent works adopt the methods of automatic annotation. 
Math-Shepherd \cite{wang2023mathshepherd} first proposes an automatic annotation framework for Monte Carlo Tree Search (MCTS). It uses LLM to generate the subsequent rollouts and calculates the probability of each step leading to the correct answer. GenPRM \cite{zhao2025genprm} adds code validation to this framework to filter logic error samples. OmegaPRM \cite{luo2024omegaprm} adds binary search to locate the first error step quickly. ReST-MCTS* \cite{zhang2024rest} uses MCTS to obtain positive samples to train PRM and then uses the trained PRM to guide the MCTS search.
This fine-grained feedback improves both training efficiency and final performance. However, obtaining high-quality process-level supervision signals requires high-cost manual annotation or demands heavy computation to generate rollouts for estimation.

Implicit PRM does not rely on explicit annotation. Implicit rewards originate from Direct Preference Optimization (DPO) \cite{rafailov2023dpo}. It transforms the reinforcement learning (RL) objective into a binary cross-entropy loss. Subsequent works explore how to derive PRMs through implicit rewards. ImplicitPRM \cite{yuan2024implicitprm} is trained only with outcome signals and derives step rewards from full response rewards through reward parameterization. PRIME \cite{cui2025prime} combines implicit PRM with RL to compute step rewards and avoids reward hacking through an online mechanism. rStar-Math \cite{guan2025rstar} proposes a self-evolving framework based on implicit PRM.
These methods can effectively reduce the annotation cost of rewards. However, existing implicit PRM only focuses on the evaluation of intermediate steps in plain text scenarios (CoT), lacking consideration of the graph structure constraint of structured knowledge (KG). Further, it cannot cope with the collaborative reasoning process of KGs and CoTs.

\section{Method}
\subsection{Dual PRM Architecture}
Given a question $Q$ and a related KG $G$, we aim to obtain an accurate answer $A*$. First, we train two implicit PRMs with outcome supervision to obtain evaluators that can quantify the quality of KG paths and CoTs, called KG-PRM and CoT-PRM. Then, in reasoning, we use the two PRMs to evaluate the KG path steps and CoT steps to screen the better ones. The selected KG path steps will guide the generation of the CoT steps, and the entities in the CoT steps will help to capture the next KG path steps. They complement each other and interact iteratively to get the accurate final answers. Our method contains two key parts: \textbf{Dual PRM Training} and \textbf{Iterative Reasoning}. The overall framework is shown in Figure \ref{fig:overview}. The prompt templates are in Appendix E.

\subsubsection{Dual PRM Training} 
In MHQA tasks, since the quality of CoT and KG reasoning is highly correlated, we adopt a data interaction strategy to make CoT-PRM and KG-PRM learn from two types of preference pairs. These PRMs use special reward functions in training so they can be trained with outcome signals and used to evaluate the reasoning process, which is theoretically demonstrated.
Specifically, we first initialize CoT-PRM and KG-PRM with the data of the corresponding modality. After that, we transform KG data into CoTs to train CoT-PRM, and transform CoT data into KG paths to train KG-PRM. KG-PRM uses preference pairs to learn structural constraints from KGs. 

\subsubsection{Iterative Reasoning} After training, we use the PRMs to evaluate KG paths and CoTs. 
In each iteration, the LLM retrieves and reconstructs new KG triples based on the question $Q$ and the previous CoT $\{c_1,...,c_{i-1}\}$. The KG-PRM evaluates the $N$ triples and selects the best as $p_i$. $p_i$ will then guide the generation of the next CoT steps. The CoT-PRM evaluates the samples to select the best as $c_i$. The iteration continues until $c_i$ reaches the final answers.

\subsection{Dual PRM Training Framework}
\subsubsection{PRM Initialization}
We first use the KG and CoT data to initialize the corresponding KG-PRM and CoT-PRM. The goal of initialization is to establish foundational evaluation criteria for the two PRMs. We directly optimize the model's preference probability distribution over preference pairs. KG-PRM learns to capture topological constraints for long chain reasoning via path preference pairs.

For KG-PRM, each piece of training data includes the question $Q$, true KG path $p_w$ ($w$ stands for win), and false KG path $p_l$ ($l$ stands for lose), which is $(Q, p_w, p_l)$. 
KG path is an alternate sequence of entities and relations, starting with the question entity and ending with the answer entity. 
\begin{equation}
    \left\{ e_Q \xrightarrow{r_1} e_1 \xrightarrow{r_2} \ldots \xrightarrow{r_n} e_A \right\}
\end{equation}
where $e_i$ are entities in intermediate steps. Each step in the KG path is a triple like $(e_Q, r_1, e_1)$. Given the paths associated with the graph structure, the KG-PRM can learn to capture topological constraints. The true samples in the data are KG paths from the question entities to the ground truth entities. And as shown in Figure \ref{fig:tf}, the false samples are obtained in the following ways: (1) Factual errors (The entities in triples are wrong.); (2) Logical errors (The relations in triples are wrong.); (3) Logical break (The entities in adjacent triples of a KG path cannot be aligned or connected.).

\begin{figure}[!htbp] 
    \centering 
    \includegraphics[width=0.45\textwidth]{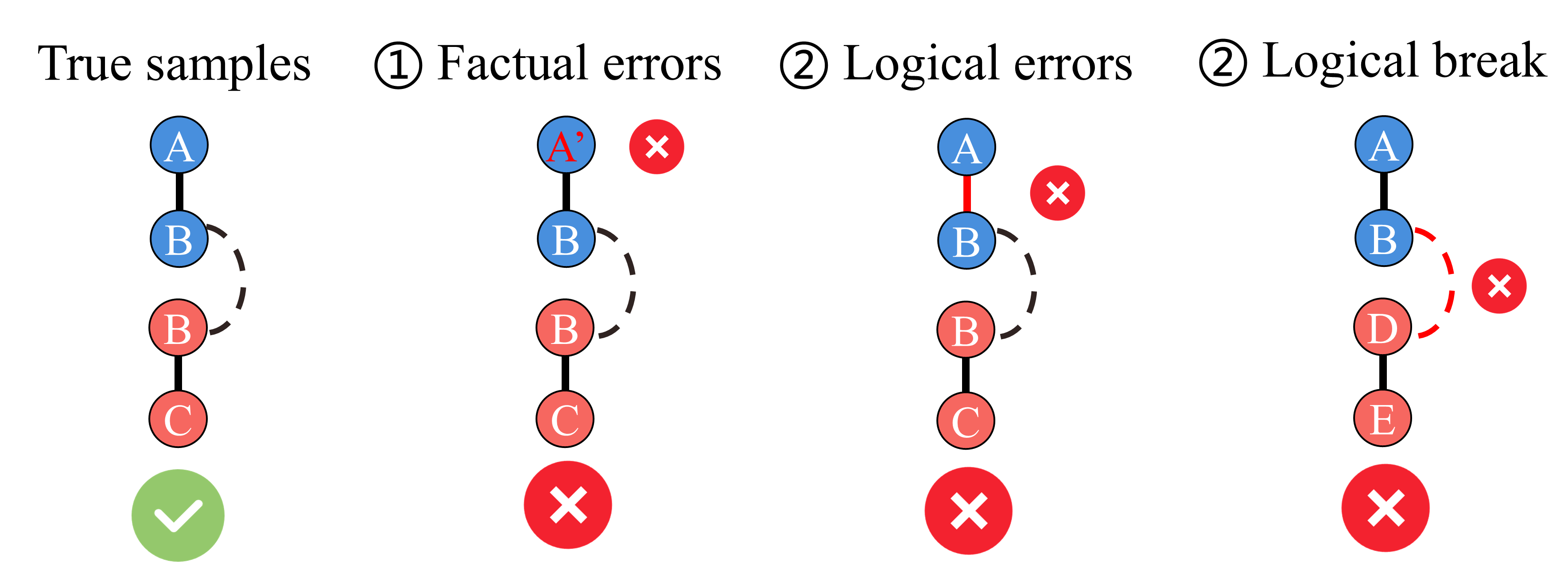} 
    \caption{True samples and false samples.} 
    \label{fig:tf} 
\end{figure}

For a question $Q$, a KG path $p$, a KG policy model $\pi_{\phi}^{KG}$, and a reference model $\pi_{ref}$, the reward function is:
\begin{equation}
r_{KG} (p \mid Q)=\gamma \log \frac{\pi_{\phi}^{KG}(p \mid Q)}{\pi_{ref}(p \mid Q)}
\end{equation}
where $\gamma$ is a hyperparameter used to control the strength of the preference signal. In training, the policy model is trained to be consistent with human preferences, and the reference model guarantees stability during optimization. Specifically, the loss function $L_{KG}$ is to maximize the log probability of the reward score difference between true and false samples. For data $(Q, p_w, p_l)$, the loss function is:
\begin{multline}
 L_{KG}=  -\log \sigma \left(\gamma \log \frac{\pi_{\phi}^{KG}\left( p_w \mid Q\right)}{\pi_{ref} \left(p_w \mid Q\right)} \right.\\
 \left. -\gamma \log \frac{\pi_{\phi}^{KG}\left(p_l \mid Q\right)}{\pi_{ref} \left(p_{l} \mid Q\right)} \right)
\end{multline}
where $\sigma$ is the sigmoid function. Then KG-PRM can learn to identify the accurate KG paths that follow KG logic rules. 

Even though KG-PRM is trained with only outcome signals, it can be used to evaluate the KG path steps because the following proposition:

\begin{proposition}\label{prop:key} (Proof in Appendix A)
Consider an ORM where the reward is  
$r_{\theta} (y)=\beta \log \frac{\pi_{\theta}(y)}{\pi_{ref}(y)}$.
Define
$q_{\theta}^{t}\left(y_{<t}, y_{t}\right):=\sum_{i=1}^{t} \beta \log \frac{\pi_{\theta}\left(y_{i} \mid y_{<i}\right)}{\pi_{r e f}\left(y_{i} \mid y_{<i}\right)}$.
$q_\theta^t$ is the exponential average of $r_\theta$ at step $t$.
\[
q_{\theta}^{t}\left(y_{<t}, y_{t}\right)=\beta \log \mathbb{E}_{\pi_{ref}\left(y\mid y_{\leq t}\right)} e^{\frac{1}{\beta} r_{\theta}{\left(y\right)}}
\]
Hence, $q_\theta^t$ represents an exact expectation of outcome reward $r_\theta$ at step $t$. 
\end{proposition}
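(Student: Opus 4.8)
The plan is to reduce both sides to the prefix log-likelihood ratio $\beta\log\frac{\pi_{\theta}(y_{\le t})}{\pi_{ref}(y_{\le t})}$ and observe that they coincide. First I would rewrite the left-hand side using the autoregressive factorization of both models: since $\prod_{i=1}^{t}\pi_{\theta}(y_i\mid y_{<i})=\pi_{\theta}(y_{\le t})$ and likewise for $\pi_{ref}$, the telescoping sum collapses to
\[
q_{\theta}^{t}(y_{<t},y_t)=\sum_{i=1}^{t}\beta\log\frac{\pi_{\theta}(y_i\mid y_{<i})}{\pi_{ref}(y_i\mid y_{<i})}=\beta\log\frac{\pi_{\theta}(y_{\le t})}{\pi_{ref}(y_{\le t})}.
\]

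Next I would unfold the right-hand side. The ORM parameterization gives $e^{\frac{1}{\beta}r_{\theta}(y)}=\frac{\pi_{\theta}(y)}{\pi_{ref}(y)}$ for every complete response $y$. Writing a complete response that extends the prefix $y_{\le t}$ as $y=(y_{\le t},y_{>t})$ and applying the chain rule $\pi(y)=\pi(y_{\le t})\,\pi(y_{>t}\mid y_{\le t})$ to both $\pi_{\theta}$ and $\pi_{ref}$, the prefix factor $\frac{\pi_{\theta}(y_{\le t})}{\pi_{ref}(y_{\le t})}$ is constant with respect to the expectation over continuations and factors out:
\[
\mathbb{E}_{\pi_{ref}(y\mid y_{\le t})}\,e^{\frac{1}{\beta}r_{\theta}(y)}=\frac{\pi_{\theta}(y_{\le t})}{\pi_{ref}(y_{\le t})}\;\mathbb{E}_{\pi_{ref}(y_{>t}\mid y_{\le t})}\frac{\pi_{\theta}(y_{>t}\mid y_{\le t})}{\pi_{ref}(y_{>t}\mid y_{\le t})}.
\]
The crucial step is then to recognize the residual expectation as an importance-weight normalization: $\mathbb{E}_{\pi_{ref}(y_{>t}\mid y_{\le t})}\frac{\pi_{\theta}(y_{>t}\mid y_{\le t})}{\pi_{ref}(y_{>t}\mid y_{\le t})}=\sum_{y_{>t}}\pi_{\theta}(y_{>t}\mid y_{\le t})=1$, since $\pi_{\theta}(\cdot\mid y_{\le t})$ is a proper distribution over continuations of $y_{\le t}$. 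Substituting and taking $\beta\log(\cdot)$ returns exactly $\beta\log\frac{\pi_{\theta}(y_{\le t})}{\pi_{ref}(y_{\le t})}=q_{\theta}^{t}(y_{<t},y_t)$, which proves the identity and shows $q_{\theta}^{t}$ is the exact exponential (log-sum-exp) average of the outcome reward $r_{\theta}$ over all completions of the first $t$ steps.

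The main obstacle I anticipate is the measure-theoretic bookkeeping of the expectation rather than any hard inequality: I must make precise that the sum/integral over "all complete continuations $y_{>t}$" — including variable lengths and the stopping/EOS behavior — is exactly the sample space on which $\pi_{\theta}(\cdot\mid y_{\le t})$ normalizes to one, and that $\pi_{ref}$ and $\pi_{\theta}$ share support so the importance ratio is well defined. This absolute-continuity assumption is already implicit in writing $r_{\theta}=\beta\log\frac{\pi_{\theta}}{\pi_{ref}}$, so once it is stated, the argument is just the chain rule plus normalization. Finally I would remark that the derivation is agnostic to the particular model: replacing $\pi_{\theta}$ by $\pi_{\phi}^{KG}$ and $\beta$ by $\gamma$ yields the same identity for $r_{KG}$, so the KG-PRM trained only with outcome signals indeed defines a valid step-level evaluator of KG paths, and symmetrically for CoT-PRM.
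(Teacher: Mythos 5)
Your proof is correct, and it reaches the identity by a more direct route than the paper. The paper argues by backward induction on the step index: it verifies the base case at $t=T$ (where the expectation degenerates because $y_{\le T}=y$) and then shows the identity propagates from $t+1$ to $t$ by peeling off a single step, using $\mathbb{E}_{\pi_{ref}(y_{t+1}\mid y_{\le t})}\bigl[\pi_{\theta}(y_{t+1}\mid y_{\le t})/\pi_{ref}(y_{t+1}\mid y_{\le t})\bigr]=\sum_{y_{t+1}}\pi_{\theta}(y_{t+1}\mid y_{\le t})=1$. You instead collapse $q_{\theta}^{t}$ to the prefix ratio $\beta\log\bigl(\pi_{\theta}(y_{\le t})/\pi_{ref}(y_{\le t})\bigr)$ and apply that same normalization argument once to the \emph{entire} tail $y_{>t}$, so no induction is needed; the paper's inductive step is exactly one token's worth of your one-shot computation. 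The key fact is identical in both proofs --- the importance-weight expectation of $\pi_{\theta}/\pi_{ref}$ under $\pi_{ref}$ over continuations equals one --- but your organization is shorter and makes the structure (chain rule plus normalization) more transparent, and you additionally flag the shared-support and variable-length-normalization caveats that the paper leaves implicit. Your closing remark that the argument transfers verbatim to $\pi_{\phi}^{KG}$ with $\gamma$ in place of $\beta$ matches how the paper uses the proposition for both PRMs.
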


So the KG-PRM can learn the expectation of outcome reward $r_{KG}$ at step $t$ to calculate the process reward. Following the previous works \cite{lu2024autopsv}, we define the process reward at step $t$ as the difference between $q^{t}$ and $q^{t-1}$. The process reward at step $t$ for KG-PRM is,
\begin{equation}
r_{KG}^{t}  =q_{KG}^{t}-q_{KG}^{t-1}=\sum_{i=t-1}^{t} \gamma \log \frac{\pi_{\phi}^{KG}\left(p_{i} \mid p_{<i}\right)}{\pi_{ref}\left(p_{i} \mid p_{<i}\right)}
\end{equation}
where $p_i$ is the triple at step $i$, and $p_{<i}$ is the KG path of the previous $i$ steps. This reward is used to evaluate the quality of generated KG path steps in the iterative reasoning process (KG Triple Screening and Reconstruction in Section \ref{IRF}). 

For CoT-PRM, each training data includes three parts: the question $Q$, true CoT $c_w$, and false CoT $c_l$, which is $(Q, c_w, c_l)$. Among them, the true samples obtain the ground truth, and the false samples are obtained in the following ways: (1) Factual errors (wrong content in reasoning steps); (2) Step skipping (omitting some reasoning steps). (3) Step redundancy (Some steps are unrelated to the QA pairs.).
Similar to KG-PRM, we adopt the causal LM log-likelihood ratio as the reward function. For a CoT $c$ and a CoT policy model $\pi_{\theta}^{CoT}$, the reward function is:
\begin{equation}
r_{CoT} (c \mid Q)=\beta \log \frac{\pi_{\theta}^{CoT}(c \mid Q)}{\pi_{ref}(c \mid Q)}
\end{equation}
where $\beta$ is used to control the signal strength. For data $(Q, c_w, c_l)$, the loss function of CoT-PRM is:
\begin{multline}
L_{CoT}=-\log \sigma\left(\beta \log \frac{\pi_{\theta}^{CoT}\left(c_{w} \mid Q\right)}{\pi_{ref}\left(c_{w} \mid Q\right)} \right.\\
 \left. -\beta \log \frac{\pi_{\theta}^{CoT}\left(c_{l} \mid Q\right)}{\pi_{ref}\left(c_{l} \mid Q\right)}\right)
\end{multline}

Then CoT-PRM can distinguish more complete and concise chains and identify factual errors. Similarly, the process rewards at step $t$ for CoT-PRM is,
\begin{equation}
    r_{CoT}^{t} =q_{CoT}^{t}-q_{CoT}^{t-1}=\sum_{i=t-1}^{t} \beta \log \frac{\pi_{\theta}^{CoT}\left(c_{i} \mid c_{<i}\right)}{\pi_{ref}\left(c_{i} \mid c_{<i}\right)}
\end{equation}
where $c_i$ is the CoT step at step $i$, and $c_{<i}$ is the previous $i$ CoT steps. In this way, CoT-PRM can be trained with outcome signals and used to evaluate the CoT steps in the iterative reasoning (CoT Step Generation in Section \ref{IRF}).

\subsubsection{Alternating Optimization}
After initialization, the two implicit PRMs learn to distinguish between true and false samples. Since the quality of CoT and KG paths is highly correlated, we design a co-training strategy, enabling both implicit PRMs to learn from dual data modalities.

The core of co-training is to enhance the implicit PRMs with data from both modalities. For data $(Q, p_w, p_l)$ from the KG, we convert the true sample $p_w$ and the false sample $p_l$ into continuous texts as the corresponding preference pairs $cp_w$ and $cp_l$. Then the derived data $(Q,cp_w,cp_l)$ is used to train the CoT-PRM. The loss function $L_{CoT-p}$ still maximizes the probability of the true sample $cp_w$ and minimizes the probability of the false sample $cp_l$, that is,
\begin{equation}
L_{CoT-p}=-\log \sigma\left(r_{CoT}\left(cp_{w}\right)-r_{CoT}\left(cp_{l}\right)\right)
\end{equation}

For the data from CoT $(Q, c_w, c_l)$, we extract triples in each step and connect them into entity-relation paths to get KG paths $pc_w$ and $pc_l$. After that, we construct preference signals of KG paths $pc_w$ and $pc_l$ to train the KG-PRM.
\begin{equation}
L_{KG-c}=-\log \sigma\left(r_{KG}\left(pc_{w}\right)-r_{KG}\left(pc_{l}\right)\right)
\end{equation}

Through co-training, the KG-derived CoT preference pairs with structure constraints enable the CoT-PRM to capture topological patterns for long-chain reasoning. Meanwhile, CoT-derived KG preference pairs expand training diversity for KG-PRM by introducing relational paths absent in the original KG. These two PRMs will subsequently serve as evaluators for evaluating KG paths and CoTs.

\subsection{Iterative Reasoning Framework}
\label{IRF}

The reasoning process can be divided into four parts: \textbf{KG Path and CoT Initialization}, \textbf{KG triple screening and reconstruction}, \textbf{CoT step generation}, and \textbf{Final Answer Generation}. The process is shown in Algorithm \ref{alg:kg_cot}. 
We apply KG-PRM and CoT-PRM in the reasoning process to evaluate the KG paths and CoTs. Then they are selected through the Best-of-N (BoN) method based on the process reward.

\subsubsection{KG Path and CoT Initialization}
Before the iteration, we first get the initial KG triple and the first CoT step. Specifically, all-mpnet-base-v2 \cite{reimers2019sbert, song2020mpnet} obtains the embeddings of question $Q$ and all KG triples $tr\in G$. The set $tr_1$ is composed of the top $m$ similar triples to $Q$ according to the cosine similarity. After that, the LLM retrieves the KG triple $p_1$ starting from the entities $e_1$ in question $Q$ from the set $tr_1$. The selection is based on the rewards from KG-PRM through the BoN method.
Then it will be submitted to the LLM along with $q$ to generate the corresponding CoT step $c_1$. $c_1$ is also screened by CoT-PRM through the BoN method.
At this point, the CoT $c$ and the KG path $p$ are initialized with $c=\{c_1\}$ and $p=\{p_1\}$. After that, it goes into the iterative process.

\begin{algorithm}[t]
\caption{Reasoning Process}\label{alg:kg_cot}
\begin{algorithmic}[1]
\REQUIRE question $Q$, knowledge graph $G$, max iterations $n$
\ENSURE Final answer $A^*$

\STATE $tr_1 \gets \operatorname{top-m}\left(\operatorname{sim}\left(Q, tr\right)\right), tr \in G$ \COMMENT{$tr$: KG triple.}
\STATE $p_1 \gets \operatorname{LLM\_retrieve}(Q, tr_1)$ \COMMENT{$p_1$: Initial KG triple.}
\STATE $c_1 \gets \operatorname{LLM\_generate}(Q, p_1)$ \COMMENT{$c_1$: First CoT step.}
\STATE $p \gets \{p_1\},\; c \gets \{c_1\},\; i \gets 2$ \COMMENT{Initialization.}
\WHILE{$i \leq n+1$}
    \STATE $s_i \gets Q \oplus c_{i-1}$ \COMMENT{$s_i$: concatenation for selecting.}
    \STATE $tr_i \gets \operatorname{top-m}\left(\operatorname{sim}\left(s_i, tr\right)\right), tr \in G$
    \STATE $p_i \gets \operatorname{LLM\_retrieve\_reconstruct}(Q, c_{i-1}, tr_i)$ \COMMENT{Get KG triple.}
    \STATE $p \gets p \cup \{p_i\}$ \COMMENT{Update KG path.}
    \STATE $c_i \gets \operatorname{LLM\_generate}(Q, p_i, c)$ \COMMENT{Get CoT step.}
    \STATE $c \gets c \cup \{c_i\}$ \COMMENT{Update CoT.}
    
    \IF{``answer'' \textbf{ in } $c_i$} 
        \STATE \textbf{break} \COMMENT{``answer'' is a keyword.}
    \ENDIF
    
    \STATE $i \gets i + 1$
\ENDWHILE

\STATE $\mathrm{prompt}_{hard} \gets [Q, c, p]$ \COMMENT{Plain text prompt.}
\STATE $\mathrm{prompt}_{soft} \gets \operatorname{text}(p)$ \COMMENT{Graph structure prompt.}
\STATE $A^* \gets \operatorname{LLM\_generate}(\mathrm{prompt}_{hard} , \mathrm{prompt}_{soft})$
\STATE \textbf{return} $A^*$
\end{algorithmic}
\end{algorithm}

\begin{table*}[htbp]  
  \centering  
  \begin{tabular}{cccccc}  
    \toprule
    \multirow{2}{*}{Category} & \multirow{2}{*}{Method} & \multicolumn{2}{c}{WebQSP} & \multicolumn{2}{c}{CWQ}\\
    \cmidrule(lr){3-4} \cmidrule(lr){5-6}
    & & F1 Score & Hit@1 & F1 Score & Hit@1 \\
    \midrule
    \multirow{3}{*}{LLM only} & Qwen2-7B \cite{yang2024qwen2technicalreport} & 0.3550 & 0.5080 & 0.2160 & 0.2530 \\ 
    & Llama-2-7B \cite{touvron2023llama} & 0.3650 & 0.5640 & 0.2140 & 0.2840 \\ 
    & Llama-3.1-8B \cite{dubey2024llama} & 0.3480 & 0.5550 & 0.2240 & 0.2810 \\ 
    \midrule
    \multirow{5}{*}{KG+LLM} & G-Retriever \cite{he2025g} & 0.4674 & 0.6808 & 0.3396 & 0.4721 \\ 
    & GRAG \cite{hu2025grag} & 0.5022 & 0.7236 & 0.3649 & 0.5018 \\ 
    & SubgraphRAG \cite{li2025simpleeffectiverolesgraphs} & 0.7057 & \underline{0.8661} & 0.4716 & 0.5698 \\
    & RoG \cite{luo2024rog} & 0.7080 & 0.8570 & 0.5620 & \underline{0.6260} \\
    & GNN-RAG \cite{mavromatis2024gnn} & \underline{0.7130} & 0.8060 & \underline{0.5940} & 0.6170 \\
    \midrule
    \multirow{2}{*}{ORM+LLM} & SkyworkRM-Llama3.1-8B \cite{liu2024skywork} & 0.5751 & 0.7329 & 0.3465 & 0.5755\\ 
    & ArmoRM-Llama3-8B \cite{wang2024armorm} & 0.5816 & 0.7198 & 0.3345 & 0.5531 \\ 
    \midrule
    \multirow{3}{*}{PRM+LLM} & RLHFlow-8B-Mistral-Data \cite{xiong2024rlhf1} & 0.5863 & 0.7320  & 0.3431 & 0.5540 \\ 
    & RLHFlow-8B-DeepSeek-Data \cite{dong2024rlhf2} & 0.5356 & 0.6815 & 0.3572 & 0.5718 \\ 
    & ImplicitPRM-DPO \cite{yuan2024implicitprm} & 0.5744 & 0.7266 & 0.3538 & 0.5650 \\ 
    \midrule
    \multirow{2}{*}{Our method} & DPRM+Llama-3.1-8B & \textbf{0.7228} & 0.8715 & \textbf{0.5988} & \textbf{0.7301} \\
    & DPRM+Llama-2-7B & 0.7220 & \textbf{0.8718} & 0.5886 & 0.7299 \\
    \bottomrule
  \end{tabular}
  \caption{ Model performance on two datasets comparing five categories of methods. The best results are \textbf{bolded}, and the best results in baselines are \underline{underlined}.}  
  \label{tab:main}  
\end{table*}

\subsubsection{KG Triple Screening and Reconstruction}
To obtain the KG path at step $i$, we fetch the next KG triple based on the question $Q$ and the previous CoT step $c_{i-1}$. First, we concatenate them together to get the string $s_{i}=Q \oplus c_{i-1}$.
After that, $s_i$ will be input into the all-mpnet-base-v2 to obtain the embedding, and select the set $tr_i$ composed of the top $m$ similar triples to it. The LLM retrieves the most relevant triples from $tr_i$ according to the question $Q$ and the previous CoT step $c_{i-1}$ and reconstructs them. Reconstruction refers to the structural adjustment of the selected triples to ensure that the head entities of the triples come from the entities $e_{i-1}$ in the CoT step $c_{i-1}$, as shown in Figure \ref{fig:re}. 

\begin{figure}[!htbp] 
    \centering 
    \includegraphics[width=0.2\textwidth]{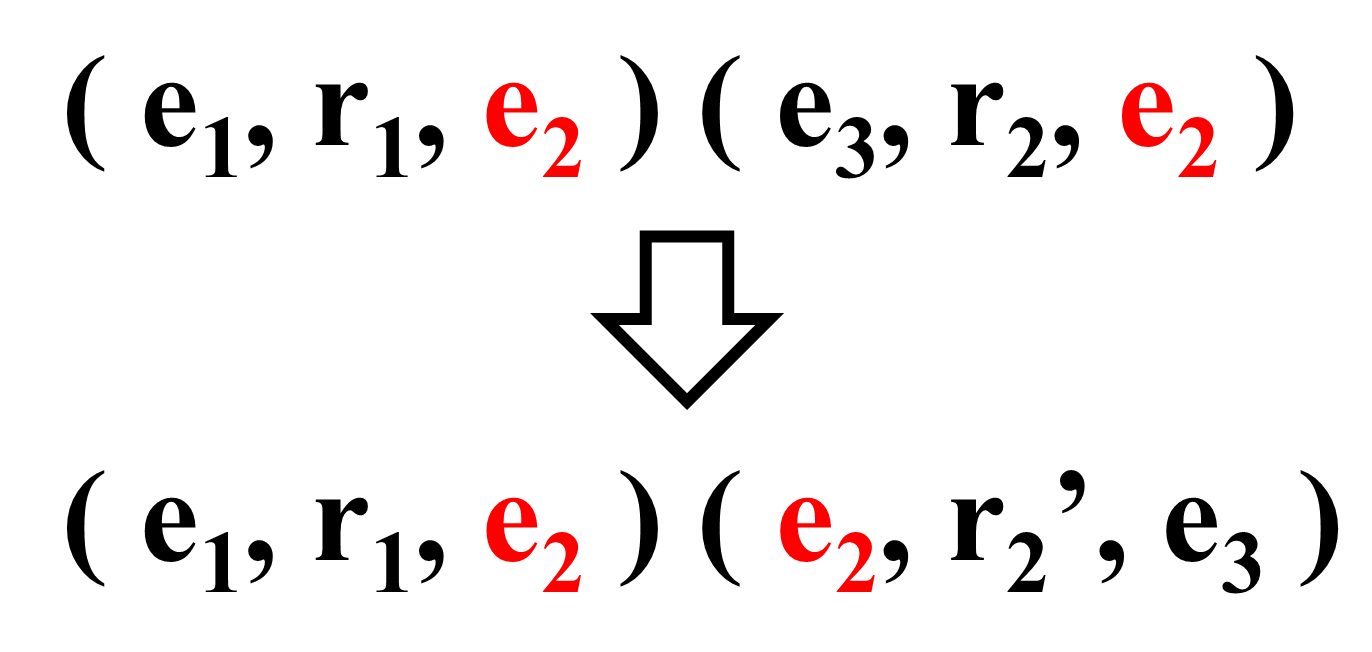} 
    \caption{Triple reconstruction for entity consistency.} 
    \label{fig:re} 
\end{figure}

This process obtains the next KG triple $p_i$ based on the rewards from KG-PRM through the BoN method and updates the current KG path to $p=\{p_1, \ldots\,p_i\}$. This triple $p_i$ will guide the expansion of CoT in the next part.

\subsubsection{CoT Step Generation}
We generate the next CoT step based on the KG triple $p_i$, and the entities in it will guide the selection of the next KG triple. We concatenate the question $Q$, the KG triple $p_i$ and the previous CoT $c=\{c_1, \ldots\,c_{i-1}\}$ according to the prompt template. Then we submit the prompt to make the LLM generate the next CoT step. This process obtains the next CoT step $c_i$ based on the rewards from CoT-PRM through the BoN method and updates the current CoT to $c=\{c_1, \ldots\,c_i\}$.

The iteration stops when CoT $c$ contains the keyword "answer" or reaches the preset limit. The CoT $c$ and the KG path $p$ will support the final answer generation.

\subsubsection{Final Answer Generation}
In generation, the prompts submitted to the LLM are divided into hard prompts (plain text) and soft prompts (textualized graph structures) \cite{hu2025grag,he2025g}. They are submitted to the LLM to generate the final answer $A*$. By combining them, text and graph information can both be preserved to reduce knowledge loss. The prompt templates are in Appendix E. 

\section{Experiments}

\subsection{Experiment Setup}
\subsubsection{Datasets.} Following previous works \citep{luo2024rog, wang2023knowledge, luo2024graph}, we conduct experiments on two datasets, WebQuestionSP (WebQSP) \citep{yih2016value} and Complex WebQuestions (CWQ) \citep{talmor2018web}. Details of datasets are in Appendix C.

\subsubsection{Baselines.} We compare DPRM with 13 baselines grouped into 4 categories: 
(1) LLM only,
including Qwen2-7B \cite{yang2024qwen2technicalreport}, Llama-2-7B \cite{touvron2023llama}, and Llama-3.1-8B \cite{dubey2024llama}.
(2) KG+LLM,
including G-Retriever \cite{he2025g}, GRAG \cite{hu2025grag}, SubgraphRAG \cite{li2025simpleeffectiverolesgraphs}, RoG \cite{luo2024rog}, and GNN-RAG \cite{mavromatis2024gnn}.
(3) ORM+LLM,
including SkyworkRM-Llama3.1-8B \cite{liu2024skywork} and ArmoRM-Llama3-8B \cite{wang2024armorm}.
(4) PRM+LLM,
including RLHFlow-8B-Mistral-Data \cite{xiong2024rlhf1}, RLHFlow-8B-DeepSeek-Data \cite{dong2024rlhf2}, and ImplicitPRM-DPO \cite{yuan2024implicitprm}.
Details of baselines are in Appendix D.

\subsubsection{Evaluation Metrics.} Following previous works \cite{luo2024rog,hu2025grag,luo2024graph}, we use Hit@1 and the F1 score as evaluation metrics. Hit@1 checks if the ground truth exists in the generated answers. The F1 score is a harmonic average of accuracy and recall, providing a metric that balances false positives and false negatives.

\subsubsection{Implementations.} We train PRMs based on Llama-3.1-8B-Instruct with $\beta=0.05$ and $\gamma=0.05$, which are empirically determined. We select the number of samples $N=8$ for screening and the number of top-m related triples $m=25$. The maximum number $n$ of iterations is set to 4, considering that the data is at most 4 hops. More details for training and hyperparameter settings are in Appendix B.

\subsection{Main Results}
We compare our method, DPRM, to other baselines on the datasets. As Table \ref{tab:main} shows, DPRM performs best on both datasets. The F1 score on WebQSP and CWQ is $1.4\%$ and $0.8\%$ above the best baseline, and Hit@1 is $0.7\%$ and $16.6\%$ above. It shows that DPRM can effectively enhance the reasoning ability of the LLM. 

The overall performance of KG+LLM baselines is better than that of LLM only baselines, indicating that KGs are important in MHQA tasks. It is also found that RMs can effectively improve the reasoning ability of LLMs in MHQA tasks. The ORM+LLM baselines increase the F1 score of the corresponding LLM by up to $67.1\%$ and Hit@1 by up to $104.8\%$. Similarly, the PRM+LLM baselines increase the F1 score by up to $68.5\%$ and Hit@1 by up to $103.5\%$. However, since these traditional RMs fail to consider the structural constraints of KGs, their overall performance is worse than that of KG+LLM baselines. This again highlights the importance of KG. In addition, because MHQA tasks involve fewer reasoning steps than mathematical reasoning tasks, the performance gap between ORMs and PRMs is not very big. On the webqsp dataset where questions are 1-2 hops, the worst-performing RM is even a PRM (RLHFlow-8B-DS-Data). This indicates that fewer reasoning steps in MHQA tasks can limit the advantage of PRMs.

We also observe that larger LLMs do not always perform better than smaller LLMs. For the performance of Llama-2-7B and Llama-3.1-8B, Llama-2-7B has a higher F1 and Hit@1 on the WebQSP dataset, by $4.9\%$ and $1.6\%$. In the CWQ dataset, its Hit@1 is still higher. This suggests that increasing the parameters does not inherently enhance the graph reasoning ability of LLMs.

\subsection{Ablation Study}

\begin{table}[!htbp]  
  \centering  
  \setlength{\tabcolsep}{1mm}
  \small
  \begin{tabular}{ccccc}  
    \toprule
    \multirow{2}{*}{Method} & \multicolumn{2}{c}{WebQSP} & \multicolumn{2}{c}{CWQ}\\
    \cmidrule(lr){2-3} \cmidrule(lr){4-5}
    & F1 Score & Hit@1 & F1 Score & Hit@1 \\
    \midrule
    DPRM & \textbf{0.7228} & \textbf{0.8715} & \textbf{0.5988} & \textbf{0.7301} \\ 
    w/o Co-training & 0.7009 & 0.8581 & 0.5419 & 0.6824 \\
    w/o Iteration & 0.7095 & 0.8587 & 0.5824 & 0.7234 \\
    w/o both & 0.6842 & 0.8500 & 0.5266 & 0.6692 \\
    \bottomrule
  \end{tabular}
  \caption{Performances of three model variables.}  
  \label{tab:ablation}  
\end{table}

We conduct a series of evaluations of DPRM to see which component plays a key role, including removing co-training in training, removing iteration in reasoning, and removing both of the two parts. 
We can see the performances of variables all decrease, as shown in Table \ref{tab:ablation}. It indicates that every component is indispensable.
Among them, removing co-training drops model performance more than removing iteration. On the CWQ dataset, F1 of the former decreased by $6.8\%$, and Hit@1 decreased by $5.5\%$ more than that of the latter. The situation is similar on WebQSP. This suggests that co-training plays a more central role in reasoning. It enables the CoT-PRM to capture topological patterns for long-chain reasoning and expand training diversity for KG-PRM by introducing relational paths absent in the original KG. 

\subsection{Analytical Experiments}

\subsubsection{Topological Pattern Learning for CoT-PRM}

\begin{figure}[!htbp] 
    \centering 
    \includegraphics[width=0.3\textwidth]{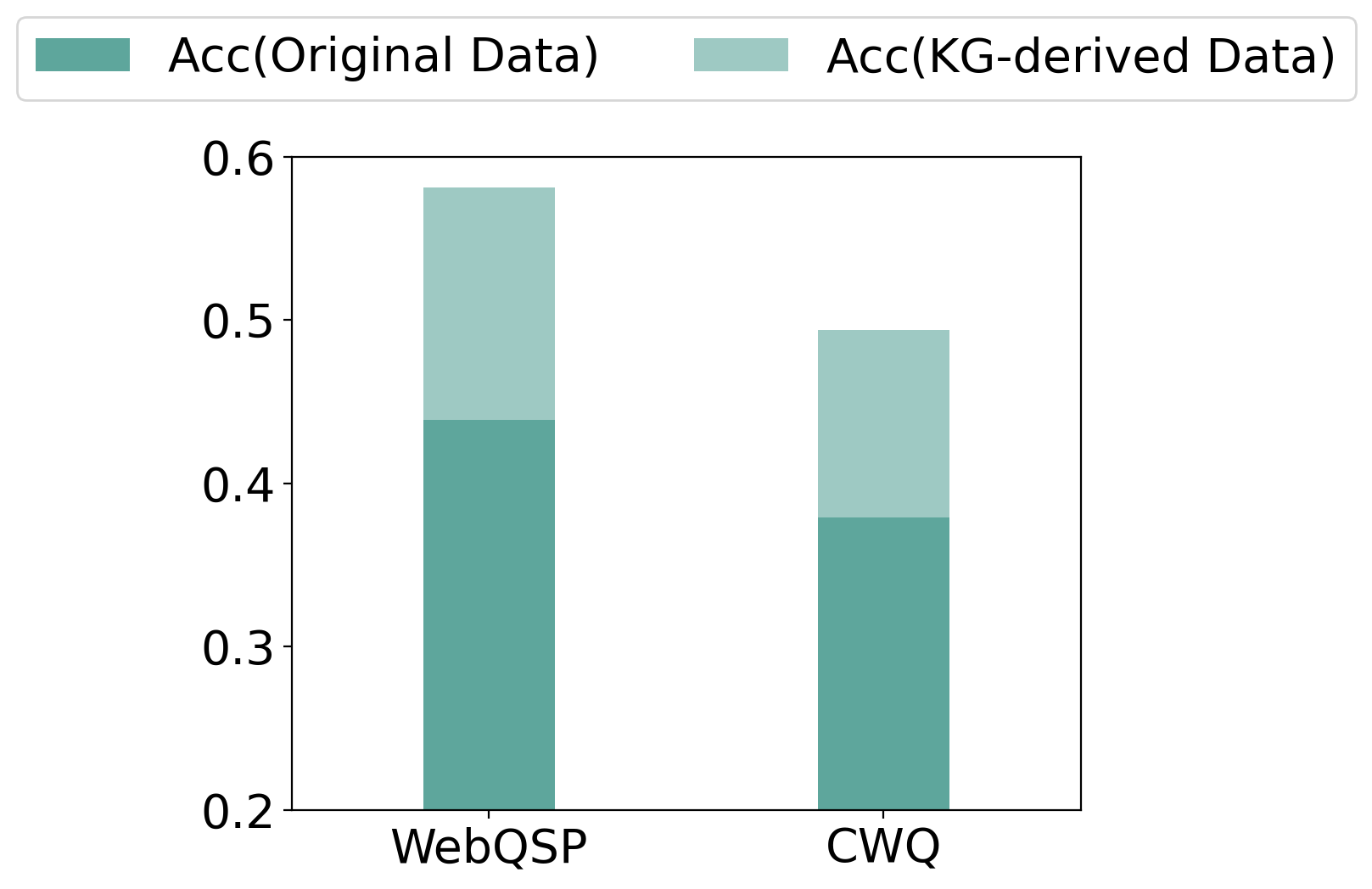} 
    \caption{Performances on the CoT-PRMs trained on original and KG-derived data.} 
    \label{fig:cocot} 
\end{figure}

To test whether the data derived from KG paths allows CoT-PRM to learn topological patterns, we randomly generate 800 new KG preference pairs and use the CoT-PRMs trained on original and KG-derived data to evaluate them. It tests whether the two CoT-PRMs can distinguish true and false samples of structured data. As shown in Figure \ref{fig:cocot}, the accuracy of the CoT-PRM trained on KG-derived data is higher than that of the original-data-trained CoT-PRM on both datasets. It indicates that co-training enables the CoT-PRM to capture topological patterns for long-chain reasoning. 

\subsubsection{New Relational Paths for KG-PRM}

\begin{figure}[!htbp] 
    \centering 
    \includegraphics[width=0.45\textwidth]{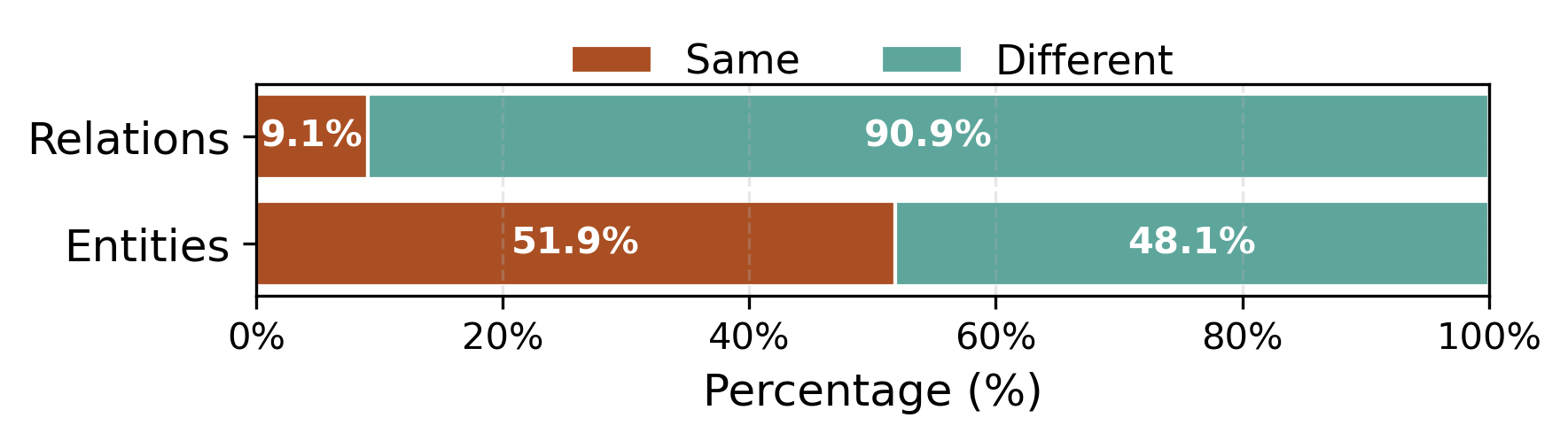} 
    \caption{Proportion of Same and Different Entities/Relations in CoT-Derived KG Data.} 
    \label{fig:cokg} 
\end{figure}

To evaluate the impact of co-training on the training data for KG-PRM, we calculate the proportion of entities and relations in the CoT-derived KG that are the same as and different from those in the original KG. As shown in Figure \ref{fig:cokg}, it keeps some key entities but shows big differences in relations. Specifically, the CoT-derived KG data does not follow standard KG relation formats like ``celebrities.romantic\_relationship.celebrity'', but instead uses more accurate terms like ``marry''.
It indicates that CoT-derived KG data expand training diversity for KG-PRM by introducing paths absent in the original KG.

\subsubsection{KG Structural Significance}

\begin{table}[htbp]  
  \centering  
  \setlength{\tabcolsep}{1mm}
  \small
  \begin{tabular}{ccccc}  
    \toprule
    \multirow{2}{*}{Method} & \multicolumn{2}{c}{WebQSP} & \multicolumn{2}{c}{CWQ}\\
    \cmidrule(lr){2-3} \cmidrule(lr){4-5}
    & F1 Score & Hit@1 & F1 Score & Hit@1 \\
    \midrule
    SkyworkRM & 0.5751 & 0.7329 & 0.3465 & 0.5755 \\
    ArmoRM & 0.5816 & 0.7198 & 0.3345 & 0.5531 \\
    RLHFlow-Mistral & 0.5863 & 0.7320  & 0.3431 & 0.5540 \\
    RLHFlow-DeepSeek & 0.5356 & 0.6815 & 0.3572 & 0.5718 \\
    ImplicitPRM-DPO & 0.5744 & 0.7266 & 0.3538 & 0.5650 \\
    KG-PRM (Our method) & \textbf{0.5946} & \textbf{0.7346} & \textbf{0.4283} & \textbf{0.6341} \\
    \bottomrule
  \end{tabular}
  \caption{Performances of RMs.}  
  \label{tab:iprm}  
\end{table}

As shown in Table \ref{tab:iprm}, our KG-PRM outperforms other RMs, especially on CWQ, which has more multi-hop questions. This shows that our method can make KG-PRM consider the graph structure constraint in KGs and learn to identify accurate KG paths that follow KG logic rules. In addition, in MHQA tasks, structural knowledge is essential, which can significantly enhance the LLM reasoning.

\subsubsection{Partial Data Robustness}

\begin{table}[htbp]  
  \centering  
  \setlength{\tabcolsep}{1mm}
  \small
  \begin{tabular}{ccccc}  
    \toprule
    \multirow{2}{*}{Method} & \multicolumn{2}{c}{WebQSP} & \multicolumn{2}{c}{CWQ}\\
    \cmidrule(lr){2-3} \cmidrule(lr){4-5}
    & F1 Score & Hit@1 & F1 Score & Hit@1 \\
    \midrule
    DPRM (full data) & 0.7228 & 0.8715 & 0.5988 & 0.7301 \\ 
    DPRM (partial data) & 0.6670 & 0.8089 & 0.5843 & 0.7190 \\
    \bottomrule
  \end{tabular}
  \caption{Performances with Full/Partial Training Data.}  
  \label{tab:partial}  
\end{table}

We evaluate DPRM with CoT-PRM and KG-PRM trained on partial datasets. Each PRM is fine-tuned on 1,000 preference pairs of CoTs or KG paths, respectively. As shown in Table \ref{tab:partial}, DPRM achieves $92.8\%$ of the full-data performance using only one-eighth of the training data. Compared with the results in Table \ref{tab:main}, our DPRM trained on partial data still outperforms 10 baselines.

\section{Conclusion}
In this paper, we propose a dual implicit process reward model (DPRM) for automatic annotation and collaborative reasoning in MHQA tasks. It trains KG-PRM and CoT-PRM to assign rewards for CoT and KG path steps during reasoning without using step-wise annotations. Different from existing implicit PRMs, KG-PRM is designed for capturing the structural information of KG; co-training of both PRMs enhances the consistency between CoT and KG: CoT-PRM learns to capture the topological patterns in KG, and KG-PRM learns from more relational and diverse paths. The experiment results support the above statements and show that DPRM achieves state-of-the-art (SOTA) performance among 13 baselines. 


\section*{Acknowledgments}
This paper is partially supported by National Natural Science
Foundation of China (NSFC Grant No. 62576353 and No. 62306330) and National Natural Science Foundation of China Distinguished Young Scholar Project (Grant No. 62125604).

\appendix

\bibliography{aaai2026}
\onecolumn
\renewcommand{\thesection}{\Alph{section}}
\frenchspacing

\setcounter{secnumdepth}{2}

\title{Appendix}

\maketitle

\section{Proof of Proposition}

\begin{proposition}
Consider an ORM where the reward is parameterized by the log-likelihood ratio of two causal LMs, i.e. 
$r_{\theta} (y)=\beta \log \frac{\pi_{\theta}(y)}{\pi_{ref}(y)}$.
Define
$q_{\theta}^{t}\left(y_{<t}, y_{t}\right):=\sum_{i=1}^{t} \beta \log \frac{\pi_{\theta}\left(y_{i} \mid y_{<i}\right)}{\pi_{r e f}\left(y_{i} \mid y_{<i}\right)}$.
$q_\theta^t$ is the exponential average of $r_\theta$ at step t.
\[
q_{\theta}^{t}\left(y_{<t}, y_{t}\right)=\beta \log \mathbb{E}_{\pi_{ref}\left(y \mid y_{\leq t}\right)} e^{\frac{1}{\beta} r_{\theta}{\left(y\right)}}
\]
\end{proposition}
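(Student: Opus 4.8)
The plan is to exploit the autoregressive factorization of both language models together with a single change-of-measure (importance-sampling) identity. The first step is to rewrite the quantity being exponentiated: since $r_\theta(y)=\beta\log\frac{\pi_\theta(y)}{\pi_{ref}(y)}$ by definition, we have immediately $e^{\frac{1}{\beta}r_\theta(y)}=\frac{\pi_\theta(y)}{\pi_{ref}(y)}$. Similarly, from the definition $q_\theta^t(y_{<t},y_t)=\sum_{i=1}^t\beta\log\frac{\pi_\theta(y_i\mid y_{<i})}{\pi_{ref}(y_i\mid y_{<i})}$ and the chain rule for autoregressive models, the partial sum collapses to $q_\theta^t=\beta\log\frac{\pi_\theta(y_{\le t})}{\pi_{ref}(y_{\le t})}$, i.e. $e^{\frac{1}{\beta}q_\theta^t}=\frac{\pi_\theta(y_{\le t})}{\pi_{ref}(y_{\le t})}$, a quantity depending only on the prefix $y_{\le t}$.

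Next I would split any full completion $y$ as $y=(y_{\le t},\,y_{>t})$ and use $\pi(y)=\pi(y_{\le t})\,\pi(y_{>t}\mid y_{\le t})$ for both $\pi_\theta$ and $\pi_{ref}$. This gives
\[
e^{\frac{1}{\beta}r_\theta(y)}=\frac{\pi_\theta(y_{\le t})}{\pi_{ref}(y_{\le t})}\cdot\frac{\pi_\theta(y_{>t}\mid y_{\le t})}{\pi_{ref}(y_{>t}\mid y_{\le t})}=e^{\frac{1}{\beta}q_\theta^t}\cdot\frac{\pi_\theta(y_{>t}\mid y_{\le t})}{\pi_{ref}(y_{>t}\mid y_{\le t})}.
\]
Taking the expectation $\mathbb{E}_{\pi_{ref}(y\mid y_{\le t})}[\cdot]$, which ranges only over the suffix $y_{>t}$, the prefix factor $e^{\frac{1}{\beta}q_\theta^t}$ is constant and pulls out, leaving
\[
\mathbb{E}_{\pi_{ref}(y\mid y_{\le t})}\,e^{\frac{1}{\beta}r_\theta(y)}=e^{\frac{1}{\beta}q_\theta^t}\;\mathbb{E}_{y_{>t}\sim\pi_{ref}(\cdot\mid y_{\le t})}\!\left[\frac{\pi_\theta(y_{>t}\mid y_{\le t})}{\pi_{ref}(y_{>t}\mid y_{\le t})}\right].
\]
The remaining expectation is a likelihood-ratio expectation under the denominator measure, which equals $\sum_{y_{>t}}\pi_\theta(y_{>t}\mid y_{\le t})=1$ (a telescoping/normalization identity). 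Hence $\mathbb{E}_{\pi_{ref}(y\mid y_{\le t})}e^{\frac{1}{\beta}r_\theta(y)}=e^{\frac{1}{\beta}q_\theta^t}$, and taking $\beta\log$ of both sides yields the claim.

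The step I expect to be the main obstacle — or at least the one requiring care — is the normalization identity $\mathbb{E}_{y_{>t}\sim\pi_{ref}}[\pi_\theta/\pi_{ref}]=1$: it is valid only under the absolute-continuity condition that $\pi_{ref}(y_{>t}\mid y_{\le t})>0$ whenever $\pi_\theta(y_{>t}\mid y_{\le t})>0$, and one must also be careful that the sum ranges over \emph{all} completions (of every length, including the one terminated immediately by an end-of-sequence token) so that $\sum_{y_{>t}}\pi_\theta(y_{>t}\mid y_{\le t})=1$ genuinely holds. Beyond that, the argument is just bookkeeping: the autoregressive factorization does all the work, and no property of the training objective or of $\theta$ is needed.
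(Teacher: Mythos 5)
Your proposal is correct, and it takes a genuinely different (and more direct) route than the paper. The paper proves the identity by backward induction on $t$: the base case $t=T$ is trivial because the conditioning on $y_{\le T}=y$ makes the expectation degenerate, and the inductive step peels off one token at a time, using the tower property $\mathbb{E}_{\pi_{ref}(y\mid y_{\le t})}=\mathbb{E}_{\pi_{ref}(y_{t+1}\mid y_{\le t})}\mathbb{E}_{\pi_{ref}(y\mid y_{\le t+1})}$ and then the single-token normalization $\sum_{y_{t+1}}\pi_\theta(y_{t+1}\mid y_{\le t})=1$. You instead collapse the entire telescoping sum to $e^{\frac{1}{\beta}q_\theta^t}=\pi_\theta(y_{\le t})/\pi_{ref}(y_{\le t})$, factor the full-sequence likelihood ratio into prefix times suffix, and dispose of the suffix in one stroke via the change-of-measure identity $\mathbb{E}_{y_{>t}\sim\pi_{ref}}[\pi_\theta/\pi_{ref}]=1$. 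The two arguments rest on exactly the same two facts (autoregressive factorization and normalization of $\pi_\theta$), but yours avoids induction entirely and makes the structure of the identity more transparent; the paper's token-by-token version buys nothing extra here beyond matching the per-step flavor of the $q_\theta^{t+1}\to q_\theta^t$ recursion. Your explicit caveats --- absolute continuity of $\pi_\theta$ with respect to $\pi_{ref}$ on suffixes, and the sum ranging over all completions including immediate termination --- are also silently assumed in the paper's proof (its line $\sum_{y_{t+1}}\pi_{ref}(y_{t+1}\mid y_{\le t})\frac{\pi_\theta(y_{t+1}\mid y_{\le t})}{\pi_{ref}(y_{t+1}\mid y_{\le t})}=\sum_{y_{t+1}}\pi_\theta(y_{t+1}\mid y_{\le t})$ needs the same hypothesis), so flagging them is a small improvement rather than a gap.
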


\begin{proof}
The Proposition can be proven using mathematical induction.
Suppose response $y$ has $T$ steps, then,

\textbf{(1)} For $\forall t < T$, if $q_{\theta}^{t+1}\left(y_{<t+1}, y_{t+1}\right)=\beta \log \mathbb{E}_{\pi_{ref}\left(y \mid y_{\leq t+1}\right)} e^{\frac{1}{\beta} r_{\theta}{\left(y\right)}}$ holds, then $q_{\theta}^{t}\left(y_{<t}, y_{t}\right)=\beta \log \mathbb{E}_{\pi_{ref}\left(y\mid y_{\leq t}\right)} e^{\frac{1}{\beta} r_{\theta}{\left(y\right)}}$ would also hold.

\textbf{(2)} At $t=T$, $q_{\theta}^{T}\left(y_{<T}, y_{T}\right)=r_{\theta}(y)=\beta \log \mathbb{E}_{\pi_{ref}\left(y \mid y_{\leq T}\right)} e^{\frac{1}{\beta} r_{\theta}{\left(y\right)}}$.

\textbf{proof of (1):}
\begin{align*}  
\beta \log \mathbb{E}_{\pi_{ref}\left(y\mid y_{\leq t}\right)} e^{\frac{1}{\beta} r_{\theta}{\left(y\right)}} &= \beta \log \mathbb{E}_{\pi_{ref}\left(y_{t+1}\mid y_{\leq t}\right)} \mathbb{E}_{\pi_{ref}\left(y\mid y_{\leq t+1}\right)} e^{\frac{1}{\beta} r_{\theta}{\left(y\right)}} \\
    &= \beta \log \mathbb{E}_{\pi_{ref}\left(y_{t+1}\mid y_{\leq t}\right)} e^{\frac{1}{\beta} q_{\theta}^{t+1} \left(y_{<t+1}, y_{t+1}\right)} \\
    &= \beta \log \mathbb{E}_{\pi_{ref}\left(y_{t+1}\mid y_{\leq t}\right)} \prod_{i=1}^{t+1} \frac{\pi_{\theta}\left(y_{i} \mid y_{<i}\right)}{\pi_{ref}\left(y_{i} \mid y_{<i}\right)} \\
    &= \beta \log \prod_{i=1}^{t} \frac{\pi_{\theta}\left(y_{i} \mid y_{<i}\right)}{\pi_{ref}\left(y_{i} \mid y_{<i}\right)} \mathbb{E}_{\pi_{ref}\left(y_{t+1}\mid y_{\leq t}\right)} \frac{\pi_{\theta}\left(y_{t+1} \mid y_{\leq t}\right)}{\pi_{ref}\left(y_{t+1} \mid y_{\leq t}\right)} \\
    &=\beta \log \prod_{i=1}^{t} \frac{\pi_{\theta}\left(y_{i} \mid y_{<i}\right)}{\pi_{ref}\left(y_{i} \mid y_{<i}\right)} \sum_{y_{t+1}} \pi_{ref}\left(y_{t+1} \mid y_{\leq t}\right) \frac{\pi_{\theta}\left(y_{t+1} \mid y_{\leq t}\right)}{\pi_{ref}\left(y_{t+1} \mid y_{\leq t}\right)} \\
    &= \beta \log \prod_{i=1}^{t} \frac{\pi_{\theta}\left(y_{i} \mid y_{<i}\right)}{\pi_{ref}\left(y_{i} \mid y_{<i}\right)} \sum_{y_{t+1}} \pi_{\theta}\left(y_{t+1} \mid y_{\leq t}\right) \\
    &= \beta \log \prod_{i=1}^{t} \frac{\pi_{\theta}\left(y_{i} \mid y_{<i}\right)}{\pi_{ref}\left(y_{i} \mid y_{<i}\right)}
\end{align*}

\textbf{proof of (2):}

The conclusion is straightforward. Since $\pi$ is autoregressive, we have

\begin{equation*}
    r_{\theta}(y) : =\beta \log \frac{\pi_{\theta}(y)}{\pi_{ref}(y)}=\beta \log \prod_{i=1}^{T} \frac{\pi_{\theta}\left(y_{i} \mid y_{<i}\right)}{\pi_{ref}\left(y_{i} \mid y_{<i}\right)}=\sum_{i=1}^{T} \beta \log \frac{\pi_{\theta}\left(y_{i} \mid y_{<i}\right)}{\pi_{ref}\left(y_{i} \mid y_{<i}\right)} .
\end{equation*}

Since $y_{\leq T} = y$, the expectation $\mathbb{E}_{\pi_{ref}(y\mid y_{\leq T} )}$ can be removed:

\begin{equation*}
    \beta \log \mathbb{E}_{\pi_{ref}(y \mid y \leq T)} e^{\frac{1}{\beta} r_{\theta}(y)}=\beta \log e^{\frac{1}{\beta} r_{\theta}(y)}=r_{\theta}(y) .
\end{equation*}

\end{proof}

Hence, $q_\theta^t$ represents an exact expectation of outcome reward $r_\theta$ at step $t$, which can be used to calculate process reward for CoT-PRM and KG-PRM.

\section{Hyperparameter Settings and Analysis}

The LLM for generating preference pairs is DeepSeek-R1-Distill-Qwen-32B. For RM baselines, we choose Llama-3.1-8B-Instruct as the reasoning LLM. For our method, we add another LLM, Llama-2-7B-Chat. The parameters of them are frozen. For the LLM only baselines, we use a zero-shot prompt to ask LLMs to answer the questions. 

We train PRMs following OpenRLHF \cite{hu2024openrlhf}. For the training hyperparameters, the maximum learning rate is set to $7 \times 10^{-7}$, and we employ a Warmup-Decay learning rate scheduler \cite{goyal2017accurate}. The maximum sequence length is set to $8192$, and the batch size for training is configured with a macro-batch size of $60$. The Adam optimizer is used with $\beta_1=0.85$ and $\beta_2=0.95$. We optimize the attention computation using FlashAttention \cite{dao2022flashattention} to reduce memory footprint and accelerate training. All experiments are performed on two NVIDIA A100 80GB PCIe GPUs.

\begin{figure}[!htbp] 
    \centering 
    \includegraphics[width=0.4\textwidth]{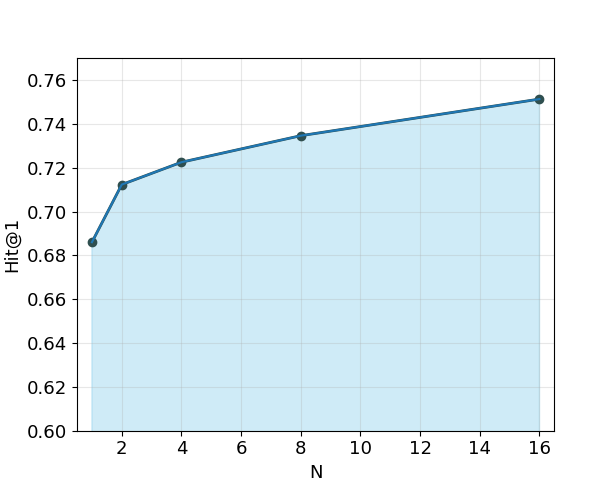} 
    \caption{Performances on different values of $N$.} 
    \label{fig:N} 
\end{figure}

For the number $N$ of Best-of-N (BoN) methods, we conduct experiments on multiple values of it to see how the performance of the model improves. Considering that experiments on the full DPRM are time-consuming and resource-intensive, we conduct experiments on KG-PRM to select the most suitable $N$. The Hit@1 of final answers continues to improve as $N$ increases, as shown in Figure \ref{fig:N}. However, when $N>8$, the increase gradually becomes flat. Considering evaluating computational costs and model efficacy, we select $N=8$ as the optimal value.

To determine the optimal value of $m$ for the top-m matching strategy, we systematically evaluate the model's performance across multiple candidate values of $m$. As shown in Figure \ref{fig:m}, the highest Hit@1 score is achieved when $m=25$, indicating that this setting is the best for our task. Based on this evidence, we fix $m=25$ for all subsequent experiments.

\begin{figure}[!htbp] 
    \centering 
    \includegraphics[width=0.65\textwidth]{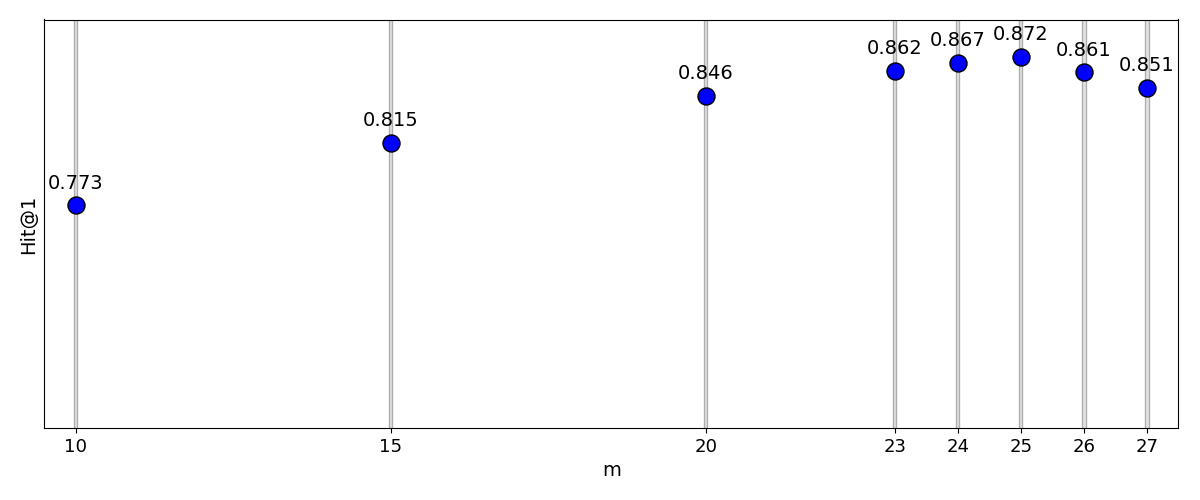} 
    \caption{Performances on different values of $m$.} 
    \label{fig:m} 
\end{figure}

\section{Datasets}

\begin{table}[htbp]  
  \centering  
  \begin{tabular}{cccc}  
    \toprule
    Dataset & \#Train & \#Test & Max \#hop  \\
    \midrule
    WebQSP & 2826 & 1628 & 2\\ 
    CWQ & 27639 & 3531 & 4 \\ 
    \bottomrule
  \end{tabular}
  \caption{Statistics of datasets.}  
  \label{tab:dataset}  
\end{table}

We use two datasets, WebQuestionSP (WebQSP) \cite{yih2016value} and Complex WebQuestions (CWQ) \cite{talmor2018web}. The statistics of the datasets are given in Table \ref{tab:dataset}. We follow previous works \cite{luo2024rog} to use the same train and test splits for fair comparison. The questions in WebQSP are 1-hop or 2-hop, and the questions in CWQ are 2-4 hops. The two datasets test the model's ability to understand and answer questions with multiple facts and reasoning steps. The KG for both datasets is Freebase \cite{bollacker2008freebase}.

\section{Baselines}

The baselines can be divided into four categories: (1) LLM only, (2) KG+LLM, (3) ORM+LLM, (4) PRM+LLM.

(1) LLM-only methods use only LLMs for reasoning without other enhancement methods.

\textbf{Qwen2-7B} \cite{yang2024qwen2technicalreport} is one of a series of LLMs developed by the Alibaba Cloud Tongyi Qianwen team, with a parameter size of 7 billion.

\textbf{Llama-2-7B} \cite{touvron2023llama} is one of the Llama 2 series of LLMs developed by Meta AI, with a parameter size of 7 billion.

\textbf{Llama-3.1-8B} \cite{dubey2024llama} is one of the Llama 3 series of LLMs developed by Meta AI, with a parameter size of 8 billion.

(2) KG+LLM methods use KGs to enhance LLM reasoning.

\textbf{G-Retriever} \cite{he2025g} retrieves the relevant nodes and edges, then constructs the relevant subgraph using the bonus Steiner tree method.

\textbf{GRAG} \cite{hu2025grag} retrieves text subgraphs and performs soft pruning to identify relevant subgraph structures effectively, and proposes a new cue strategy.

\textbf{SubgraphRAG} \cite{li2025simpleeffectiverolesgraphs} generates accurate and explainable answers by efficiently retrieving relevant subgraphs from KGs and leveraging LLMs for reasoning.

\textbf{RoG} \cite{luo2024rog} proposes a planning-search-reasoning framework, which retrieves reasoning paths from KGs to guide LLMs in reasoning.

\textbf{GNN-RAG} \cite{mavromatis2024gnn} integrates graph neural networks (GNNs) as retrieval mechanisms to extract structured knowledge paths from KGs, which are then verbalized and fed into LLMs for answer generation. 

(3) ORM+LLM methods use ORMs to enhance LLM reasoning.

\textbf{SkyworkRM-Llama3.1-8B} \cite{liu2024skywork} is an open-source reward model trained on the high-quality dataset Skywork-SynPref-40M via human-AI collaborative curation.

\textbf{ArmoRM-Llama3-8B} \cite{wang2024armorm} is an open-source multidimensional reward model trained with GPT-4-generated synthetic data via RLAIF.

(4) PRM+LLM methods use PRMs to enhance LLM reasoning.

\textbf{RLHFlow-8B-Mistral-Data} \cite{xiong2024rlhf1} is an open-source LLM alignment framework that trains 8B parameter models via DPO (Direct Preference Optimization) using Mistral-generated synthetic preference data.

\textbf{RLHFlow-8B-DeepSeek-Data} \cite{dong2024rlhf2} extends the RLHFlow framework by leveraging DeepSeek-generated preference pairs for DPO training.

\textbf{ImplicitPRM-DPO} \cite{yuan2024implicitprm} pioneers annotation-free process reward modeling by combining implicit step-level reward parameterization with DPO, enabling efficient multi-step reasoning alignment without human/AI-generated rollouts.

\section{Prompt Templates}
\label{prompt}

\begin{figure}[!htbp] 
    \centering 
    \caption{Prompt templates for generating CoT-PRM training data.}
    \label{fig:ge-kg} 
    
    \begin{subfigure}{0.8\textwidth}
        \centering
        \includegraphics[width=\linewidth]{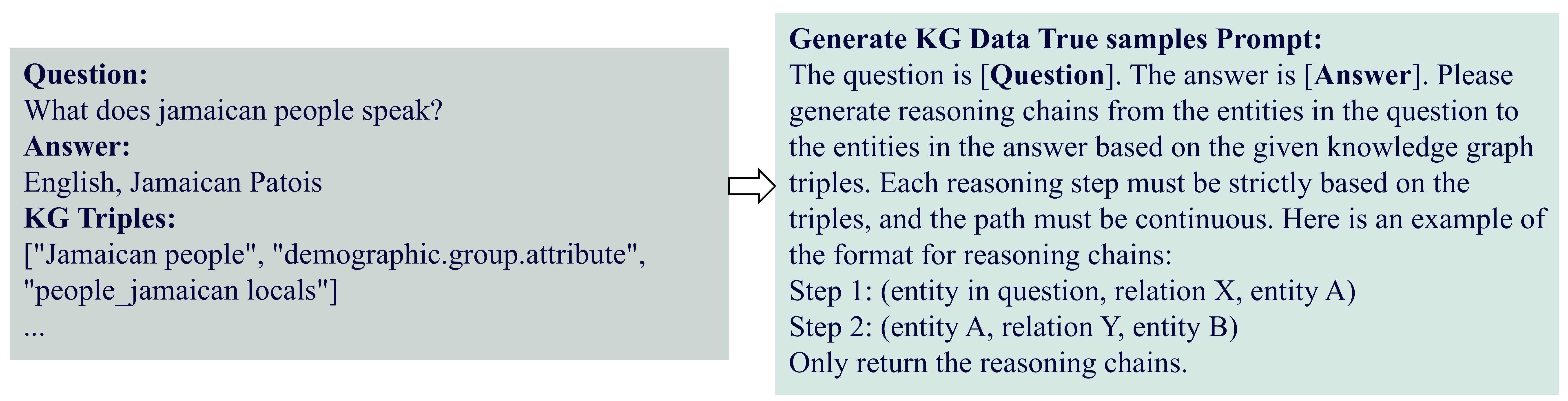} 
        \caption{Prompt template for true samples.}
    \end{subfigure}
    \hfill 
    \begin{subfigure}{0.8\textwidth}
        \centering
        \includegraphics[width=\linewidth]{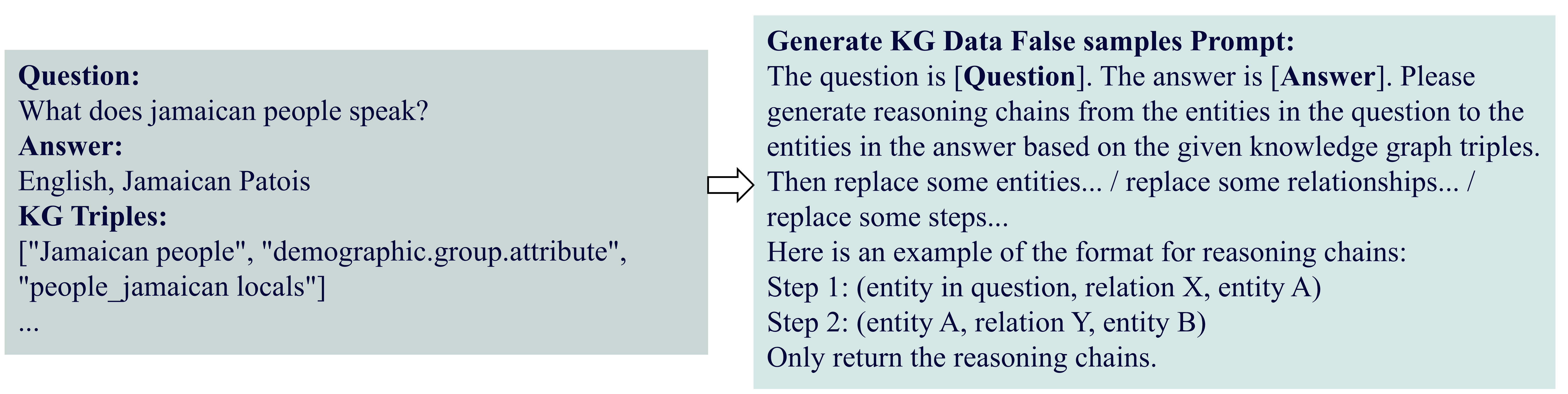} 
        \caption{Prompt template for false samples.}
    \end{subfigure}
\end{figure}

For the prompt to generate training data for KG-PRM, the prompt templates are in Figure \ref{fig:ge-kg}
The true samples in the data are KG paths from the question entities to the ground truth entities. And the false samples are obtained in the following ways: (1) Factual errors (The entities in triples are wrong.); (2) Logical errors (The relations in triples are wrong.); (3) Logical break (The entities in adjacent triples of a KG path cannot be aligned or connected.).

\begin{figure}[!htbp] 
    \centering 
    \caption{Prompt templates for generating KG-PRM training data.}
    \label{fig:ge-cot} 
    
    \begin{subfigure}{0.8\textwidth}
        \centering
        \includegraphics[width=\linewidth]{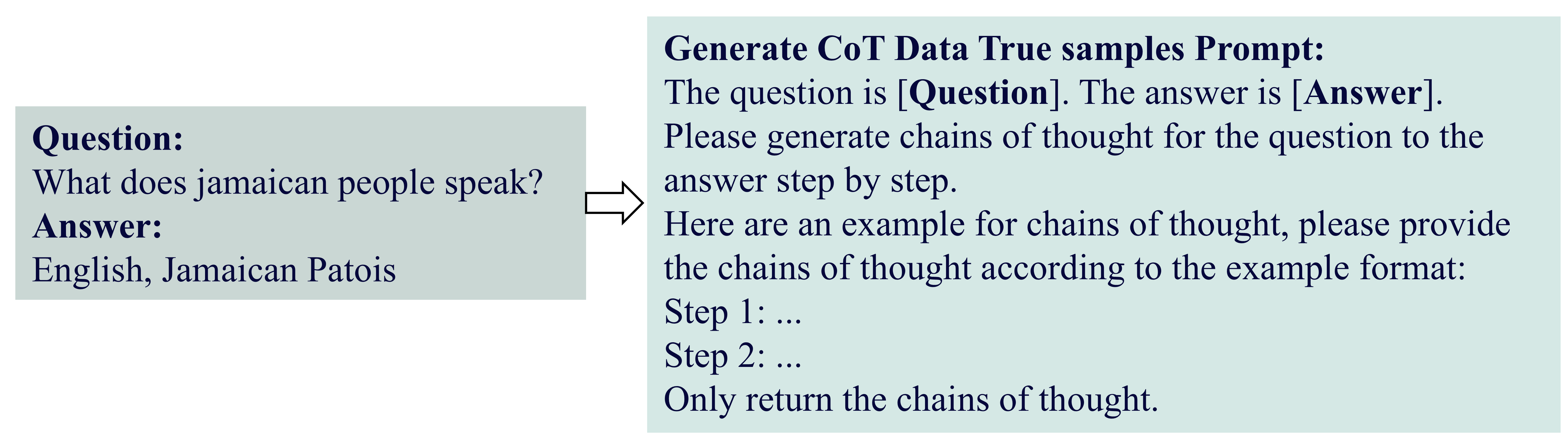} 
        \caption{Prompt template for true samples.}
    \end{subfigure}
    \hfill 
    \begin{subfigure}{0.8\textwidth}
        \centering
        \includegraphics[width=\linewidth]{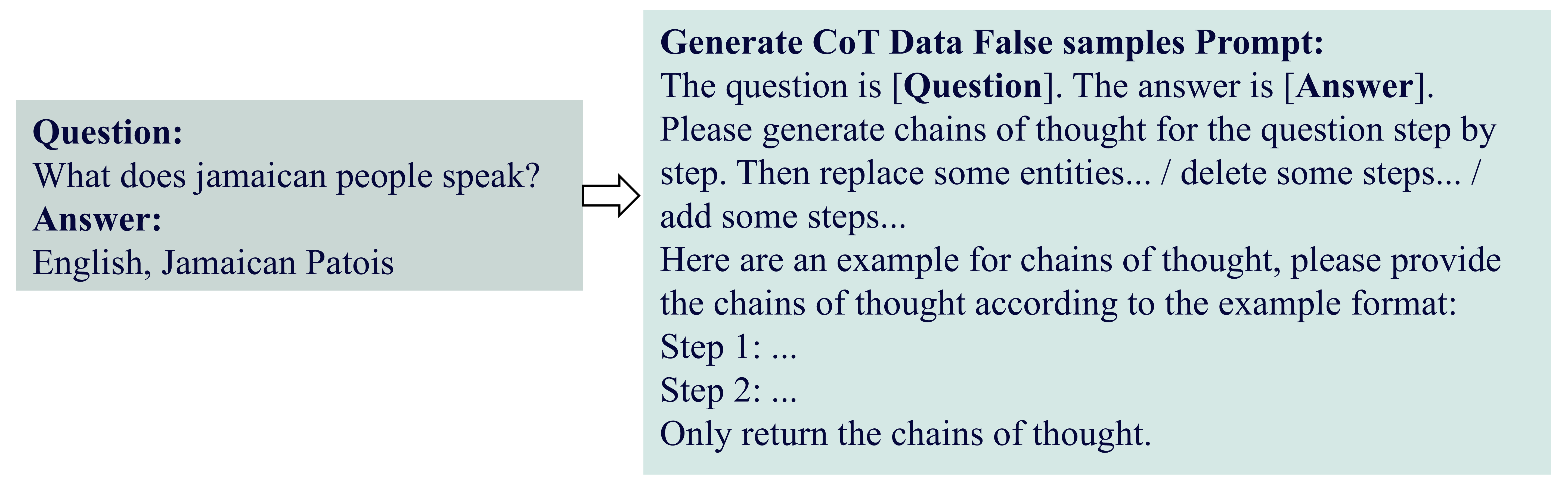} 
        \caption{Prompt template for false samples.}
    \end{subfigure}
\end{figure}

For the prompt to generate training data for CoT-PRM, the prompt templates are in Figure \ref{fig:ge-cot}
The true samples obtain the ground truth, and the false samples are obtained in the following ways: (1) Factual errors (wrong content in reasoning steps); (2) Step skipping (omitting some reasoning steps). (3) Step redundancy (Some steps are unrelated to the QA pairs.).

\begin{figure}[!htbp] 
    \centering 
    \includegraphics[width=0.8\textwidth]{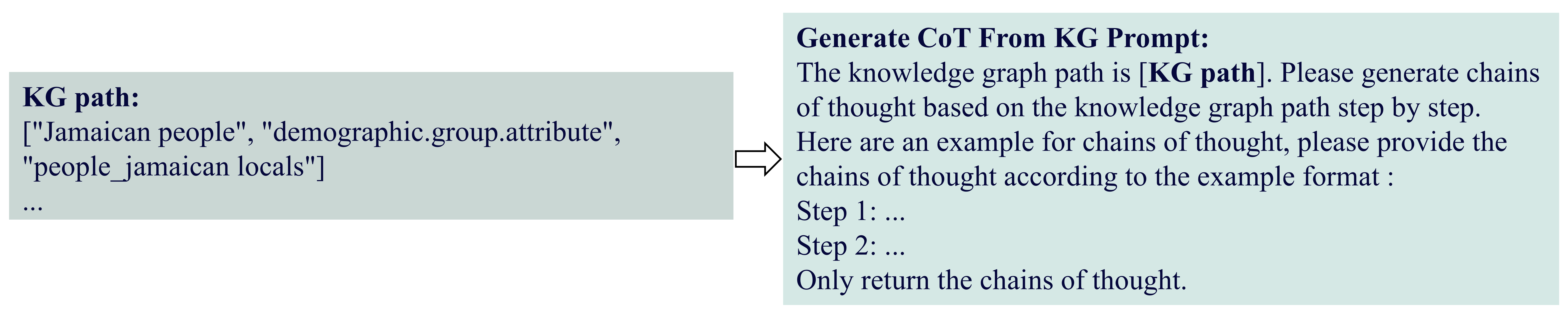} 
    \caption{Prompt template for KG-derived CoT data.} 
    \label{fig:cot2} 
\end{figure}

For the prompt to transform KG samples to CoT samples, we convert the triples in samples into continuous texts, as shown in Figure \ref{fig:cot2}.

\begin{figure}[!htbp] 
    \centering 
    \includegraphics[width=0.8\textwidth]{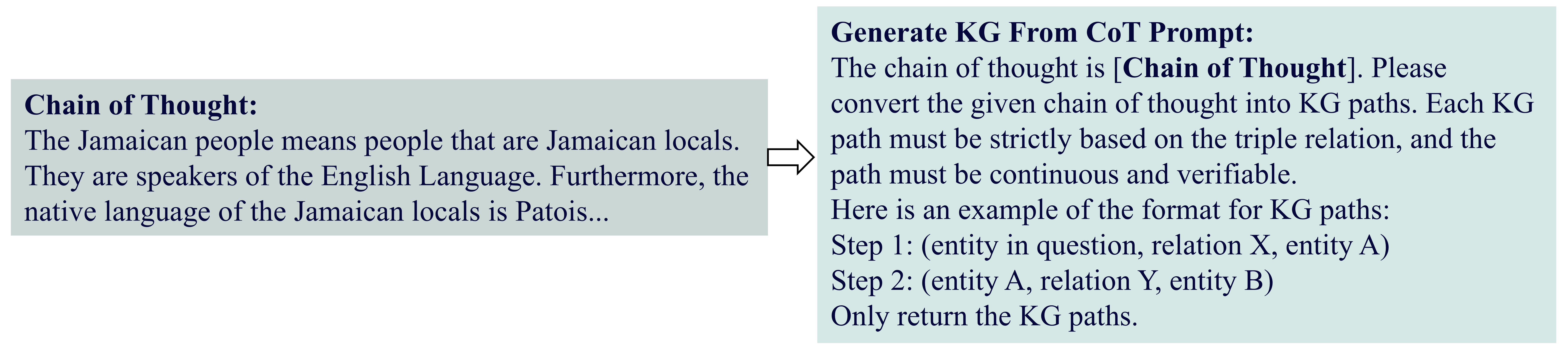} 
    \caption{Prompt template for CoT-derived KG data.} 
    \label{fig:kg2} 
\end{figure}

For the prompt to transform CoT samples to KG samples, we extract triples in each step and connect them into entity-relation paths, as shown in Figure \ref{fig:kg2}.

\begin{figure}[!htbp] 
    \centering 
    \includegraphics[width=0.8\textwidth]{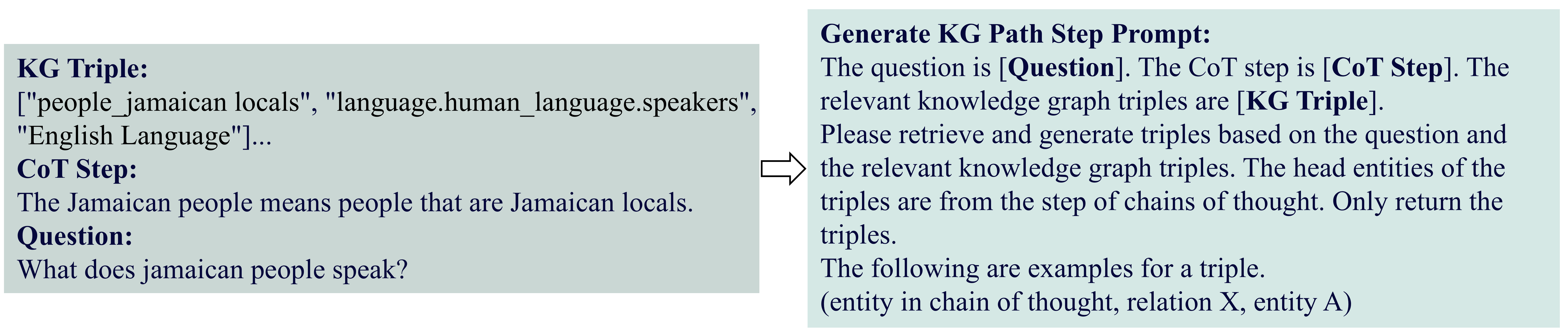} 
    \caption{Prompt template for obtaining KG path steps.} 
    \label{fig:promptkg} 
\end{figure}

For the prompt to retrieve and reconstruct the KG path steps, the LLM retrieves triples from the triple set $tr_i$ and reconstructs them according to the preset prompt template to make the triple start from the entities in the previous CoT step $c_{i-1}$, as shown in Figure \ref{fig:promptkg}.  

\begin{figure}[!htbp] 
    \centering 
    \includegraphics[width=0.8\textwidth]{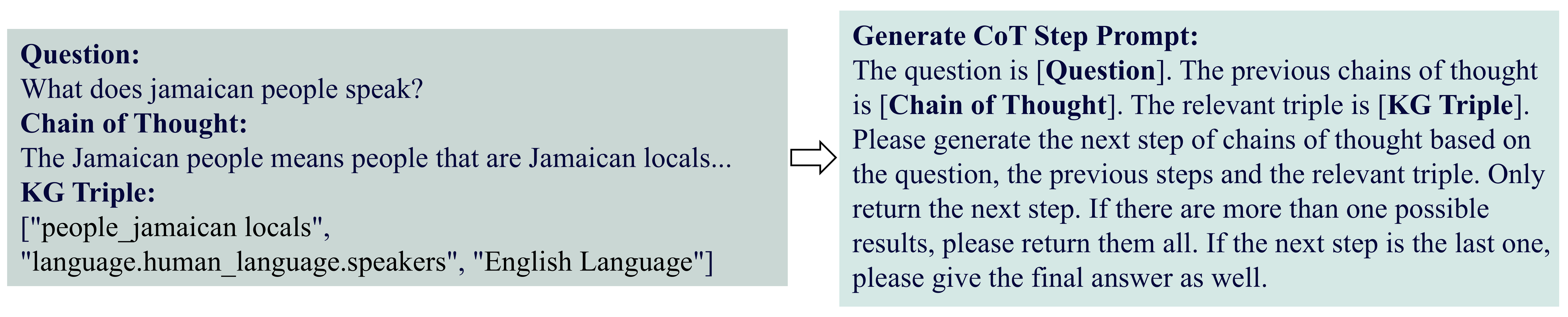} 
    \caption{Prompt template for obtaining CoT steps.} 
    \label{fig:promptcot} 
\end{figure}

For the prompt to generate the CoT steps, the KG step $p_{i}$ is first converted into a sentence and then combined with the question $Q$ and the previous CoT ${c_1,\ldots, c_{i-1}}$ according to the preset prompt template, as shown in Figure \ref{fig:promptcot}. The prompt is then submitted to the LLM to generate new CoT steps.

\begin{figure}[!htbp] 
    \centering 
    \includegraphics[width=0.8\textwidth]{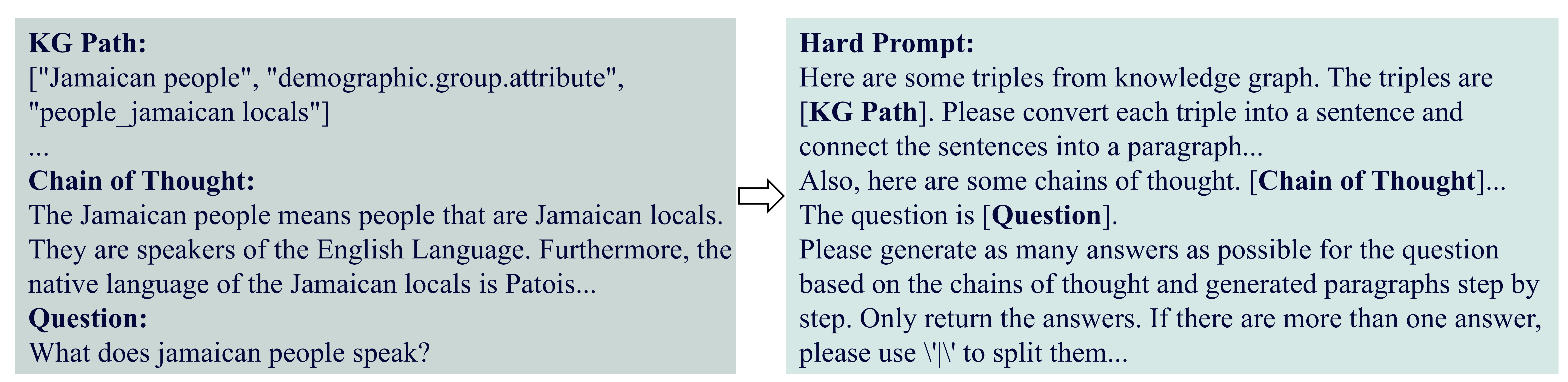} 
    \caption{Hard prompt template.} 
    \label{fig:hard prompt} 
\end{figure}

For the hard prompt, the triplets in the KG paths $p$ are first converted into sentences and then combined with the question $Q$ and the CoTs $c$ according to the preset prompt template, as shown in Figure \ref{fig:hard prompt}.

\begin{figure}[!htbp] 
    \centering 
    \includegraphics[width=0.8\textwidth]{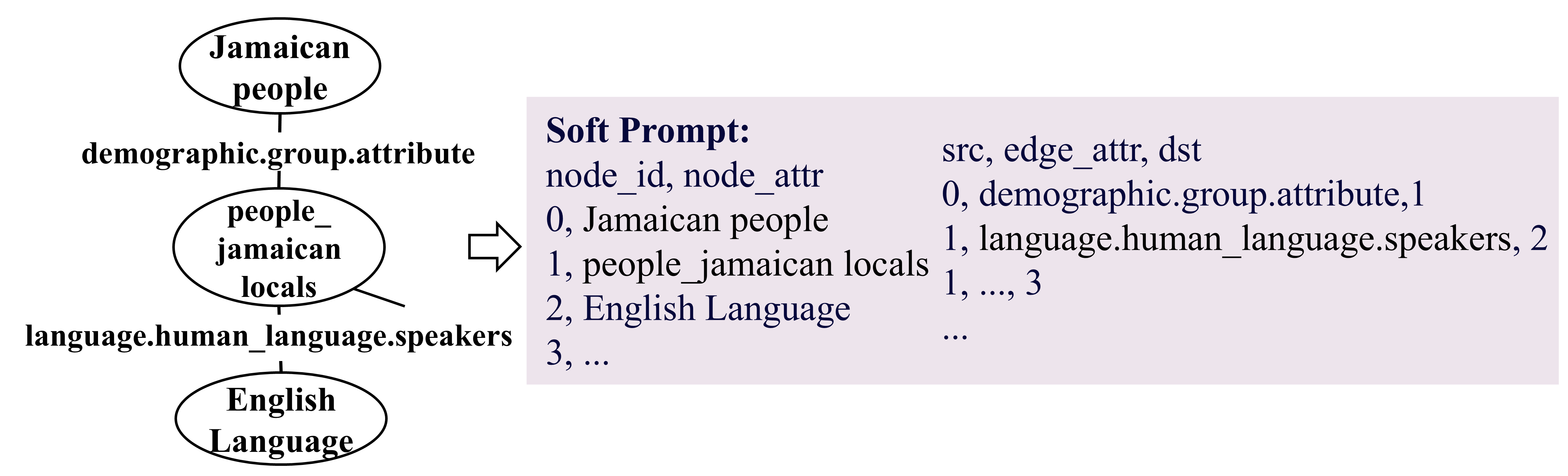} 
    \caption{Soft prompt template.} 
    \label{fig:soft prompt} 
\end{figure}

For the soft prompt, the graph structure of KG paths $p$ is textualized, as shown in Figure \ref{fig:soft prompt}.



\end{document}